\theoremstyle{plain}
\newtheorem{theorem}{Theorem}[section]
\newtheorem{proposition}[theorem]{Proposition}
\newtheorem{lemma}[theorem]{Lemma}
\newtheorem{corollary}[theorem]{Corollary}
\theoremstyle{definition}
\newtheorem{definition}[theorem]{Definition}
\newtheorem{notation}[theorem]{Notation}
\newtheorem{remark}[theorem]{Remark}
\newcommand{\dd}{\text{d}}
\icmltitlerunning{When Maximum Entropy Misleads Policy Optimization}
\begin{document}

\twocolumn[
\icmltitle{When Maximum Entropy Misleads Policy Optimization}




\begin{icmlauthorlist}
\icmlauthor{Ruipeng Zhang}{yyy}
\icmlauthor{Ya-Chien Chang}{yyy}
\icmlauthor{Sicun Gao}{yyy}
\end{icmlauthorlist}

\icmlaffiliation{yyy}{Computer Science and Engineering, UC San Diego}

\icmlcorrespondingauthor{Ruipeng Zhang}{ruz019@ucsd.edu}

\icmlkeywords{Machine Learning, ICML}

\vskip 0.3in
]



\printAffiliationsAndNotice{} 

\begin{abstract}
The Maximum Entropy Reinforcement Learning (MaxEnt RL) framework is a leading approach for achieving efficient learning and robust performance across many RL tasks. However, MaxEnt methods have also been shown to struggle with performance-critical control problems in practice, where non-MaxEnt algorithms can successfully learn. In this work, we analyze how the trade-off between robustness and optimality affects the performance of MaxEnt algorithms in complex control tasks: while entropy maximization enhances exploration and robustness, it can also mislead policy optimization, leading to failure in tasks that require precise, low-entropy policies. Through experiments on a variety of control problems, we concretely demonstrate this misleading effect. Our analysis leads to better understanding of how to balance reward design and entropy maximization in challenging control problems. 

\end{abstract}

\section{Introduction}

The Maximum Entropy Reinforcement Learning (MaxEnt RL) framework~\cite{ziebart2008maximum, abdolmaleki2018maximum, sac, han2021max} augments the standard objective of maximizing return with the additional objective of maximizing policy entropy. 
MaxEnt methods such as Soft-Actor Critic (SAC)~\cite{sac} have shown superior performance than other on-policy or off-policy methods~\cite{trpo, ppo, ddpg, td3} in many standard continuous control benchmarks~\cite{SpinningUp, sb3, tianshou, Open_RL}. Explanations of their performance include better exploration, smoothing of optimization landscape, and enhanced robustness to disturbances~\cite{hazan2019provably, ahmed2019understanding, eysenbach2021maximum}.

\begin{figure}[h!]
    \centering    
    \includegraphics[width=1\linewidth]{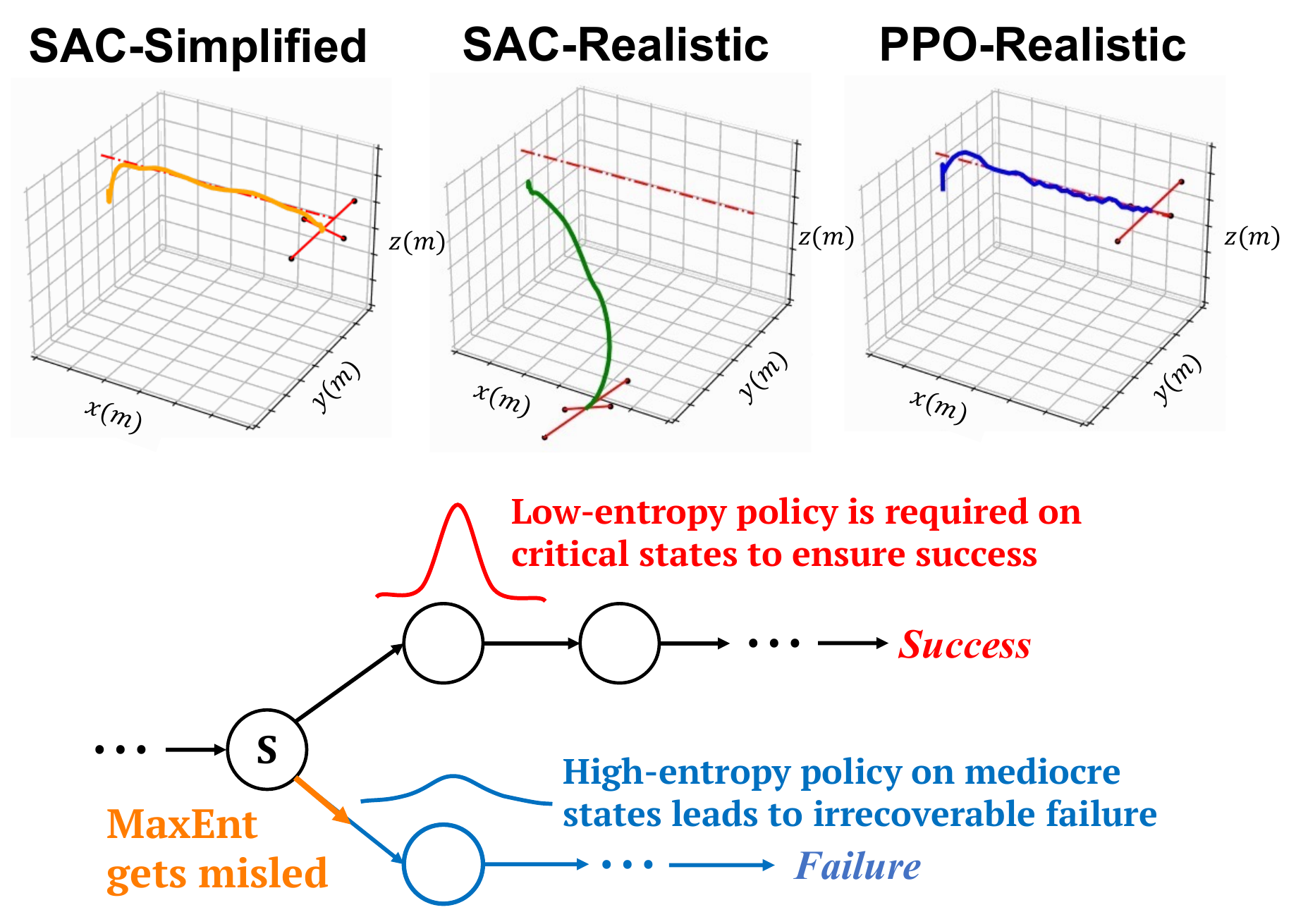}
    \vspace{-.3cm}
    \caption{({\bf Upper}) In the quadrotor control environment, SAC learns well under simplified dynamics, but fails to learn when under realistic dynamics models. PPO can learn well despite the use of the latter. ({\bf Lower}) Intuitive illustration of hard control problems, where critical states naturally require low-entropy policies, while MaxEnt RL can favor mediocre states with robust policies of low returns that branch out towards failure and are not recoverable.}
    \vspace{-.3cm}
    \label{fig:intro}
\end{figure}

Interestingly, the well-motivated benefits of MaxEnt and SAC have not led to its dominance in RL for real-world continuous control problems in practice~\cite{shengren2022performance, tan2023new, xu2021karting, radwan2021obstacles}.  
Most recent RL-based robotic control work~\cite{kaufmann2023champion, miki2022learning, zhuang2024humanoid} mostly still uses a combination of imitation learning and fine-tuning with non-MaxEnt methods such as PPO~\cite{ppo}. The typical factors of consideration that favor PPO over SAC in practice include computational cost, sensitivity to hyperparameters, and ease of customization. Often the performance by SAC is indeed shown to be inferior to PPO despite efforts in tuning~\cite{muzahid2021comparison, lee2021deep, nair2024scoping}. In fact, it is easy to reproduce such behaviors. Figure~\ref{fig:intro} shows the comparison of SAC and PPO for learning to control a quadrotor to follow a path. When the underlying model for the quadrotor is a simplified dynamics model,
SAC can quickly learn a stable controller. When a more realistic dynamics model for the quadrotor is used, then SAC always fails, while PPO can succeed under the same initialization and dynamics. 

In this paper, we show how the conflict between maximizing entropy and maximizing overall return can be magnified and hinder learning in performance-critical control problems that naturally require precise, low-entropy policies to solve.  

The example of quadcopter control in Figure~\ref{fig:intro} highlights a common structure in complex control tasks: achieving desired performance often requires executing precise actions at a sequence of critical states. At these states, only a narrow (often zero-dimensional) subset of the action space is feasible, hence the ground-truth optimal policy has inherently low entropy. (In aerodynamic terms, this is often referred to as {\em flying on the edge of instability}.) Conversely, actions that deviate from this narrow feasible set often lead to states that are not recoverable: once the system enters these states, all available actions tend to produce similarly poor outcomes but 
accumulate short-term ``entropy benefits” that can be favored by MaxEnt. 
Over time, this drift can compound, ultimately pushing the system into irrecoverable failure. 

Consequently, MaxEnt RL
may bias the agent toward suboptimal behaviors rather than the precise low-entropy optimal policies that are key to solving hard control problems.  

Formalizing this intuition, we give an in-depth analysis of the trade-off.
Our main result is that for an arbitrary MDP, there exists an explicit way of introducing entropy traps that we define as {\em Entropy Bifurcation Extension}, such that MaxEnt methods can be misled to consider an arbitrary policy distribution as MaxEnt-optimal, while the ground-truth optimal policy is not affected by the extension. Importantly, this is not a matter of sample efficiency or exploration bias during training, but is the end result at the convergence of MaxEnt algorithms. Consequently, the misleading effect of entropy can occur easily where standard policy optimization methods are not affected.

We then demonstrate that this concern is not theoretical, and can in fact explain key failures of MaxEnt algorithms in practical control problems.  
We analyze the behavior of SAC in several realistic control environments, including controlling wheeled vehicles at high speeds, quadrotors for trajectory tracking, quadruped robot control that directly corresponds to hardware platforms. We show that the gap between the value landscapes under MaxEnt and regular policy optimization explains the difficulty of SAC for converging to feasible control policies on these environments. 

Our analysis does not imply that MaxEnt is inherently unsuitable for control problems. In fact, following the same analysis, we can now concretely understand why MaxEnt leads to successful learning on certain environments that benefit from robust exploration, including common benchmarking OpenAI Gym environments where SAC generally performs well. Overall, the analysis aims to guide reward design and hyperparameter tuning of MaxEnt algorithms for complex control problems.

We will first give a toy example to showcase the core misleading effect of entropy maximization in Section~\ref{section:toy}, and then generalize the construction to the technique of entropy bifurcation extension in Section~\ref{section:theory}. We then show experimental results of how the misleading effects affect learning in practice, and how the adaptive tuning of entropy further validates our analysis in Section~\ref{section:experiment}.

\section{Related Work}

\paragraph{MaxEnt RL and Analysis.} The MaxEnt RL framework incorporates an entropy term in the RL objective and performs probability matching, such that the policy distribution aligns with the soft-value landscape~\cite{ziebart2008maximum, toussaint2009robot, rawlik2013stochastic, fox2015taming, o2016combining, abdolmaleki2018maximum, sac, mazoure2020leveraging, han2021max}. MaxEnt RL has strong theoretical connections to probabilistic inference~\cite{toussaint2009robot, rawlik2013stochastic, levine2018reinforcement} and well-motivated for ensuring robustness from a stochastic inference~\cite{ziebart2010modeling, eysenbach2021maximum} and game-theoretic perspective~\cite{grunwald2004game, ziebart2010maximum, han2021max, kim2023adaptive}. SAC and algorithms such as SAC-NF~\cite{mazoure2020leveraging}, MME~\cite{han2021max} and MEow~\cite{chao2024maximum} 
have outperformed most non-MaxEnt methods in many standard benchmarking environments~\cite{gym, mujoco, gymnasium}. A common explanation of the benefits of MaxEnt is that it enhances exploration~\cite{sac, hazan2019provably}, smoothes the optimization landscape~\cite{ahmed2019understanding}, and solves robust versions of the control problems~\cite{eysenbach2021maximum}. 
Despite the benefits, we show that they can also mislead MaxEnt to converge to suboptimal policies in complex control problems. 
\vspace{-.2cm}

\paragraph{Practical Difficulties with MaxEnt in Control Problems.} RL algorithms are well-known to be sensitive to parameter tuning~\cite{wang2020meta, muzahid2021comparison, nair2024scoping}.  
Various recent learning-based robotics control work reported that SAC delivers suboptimal solutions compared to PPO in complex control problems~\cite{tan2023new, xu2021karting, radwan2021obstacles}. We aim to understand the discrepancy between such results and the generally good performance of MaxEnt algorithms~\cite{haarnoja2018soft,SpinningUp, sb3}. 
\vspace{-.2cm}

\paragraph{Trade-off between robustness and optimality.} There is a long line of work studying the trade-off between robustness and performance in supervised deep learning~\cite{su2018robustness,zhang2019theoretically,tsipras2018robustness, raghunathan2019adversarial, raghunathan2020understanding, yang2020closer}. This trade-off in the RL context often overlaps with exploration issues mentioned above. Note that instead of sample efficiency or exploration issues, in this work we focus on pointing out the deeper issue of misguiding policy optimization results {\em at convergence}. 

\section{Preliminaries}
\label{sec:preliminaries}

A Markov Decision Process (MDP) is defined by the tuple:
$M = (S, A, P, r, \gamma)$ where: $S$ is the state space, $A$ is the action space, which can be discrete or continuous, $P(s' | s, a)$ is the transition probability distribution, defining the probability of transitioning to state $s'$ after taking action $a$ at state $s$. $r(s, a)$ is the reward function, which specifies the immediate scalar reward received for taking action $a$ at state $s$. $\gamma \in [0,1)$ is the discount factor. A policy is a mapping $\pi:S\rightarrow\Delta A$, where $\Delta A$ is the probability simplex over the action space $A$, defines a probability distribution over actions given any state in $S$. We often write the distribution determined by a policy $\pi$ at a state $s$ as $\pi(\cdot|s)$. The goal of standard RL is to find an optimal policy $\pi^*$ maximizes the expected return over the trajectory of states and actions to achieve the best cumulative reward.

Maximum Entropy RL extends the standard framework by incorporating an entropy term into the objective, encouraging stochasticity in the optimal policy. This modification ensures that the agent not only maximizes reward but also maintains exploration. Instead of maximizing only the expected sum of rewards, the agent maximizes the entropy-augmented objective:
\begin{equation}
J(\pi) = \mathbb{E}[\sum_{t=0}^{\infty} \gamma^t (r(s_t, a_t) + \alpha H(\pi(\cdot | s_t)))]
\end{equation}
where $H(\pi(\cdot | s)) = - \mathbb{E}_{a \sim \pi(\cdot | s)} [\log \pi(a | s)]$ is the entropy of the policy at state $s$.
The coefficient $\alpha\geq 0$ controls weight on entropy. The Bellman backup operator $T^\pi$ is:
\begin{equation}
    T^\pi Q(s_t, a_t) = r(s_t, a_t) + \gamma \mathbb{E}_{s_{t+1} \sim p} [V(s_{t+1})],
\end{equation}
where $V(s_t) = \mathbb{E}_{a_t \sim \pi}[ Q(s_t, a_t)]+\alpha H(\pi(\cdot|s_t))$ is the soft state value at $s_t$. Treating Q-values as energy, the Boltzmann distribution induced then at state $s$ is defined as:
\begin{equation}
\pi_Q^*(a|s) = \exp({\alpha}^{-1}{Q^{\pi}(s, a)})/Z(\pi(\cdot|s))
\end{equation}
with $Z(\pi(\cdot|s))=\int \exp({\alpha}^{-1}{Q^{\pi}(s, a)})\dd a$ as the normalization factor. Policy update in MaxEnt RL at each state $s$ aims to minimize the KL divergence between the policy $\pi(\cdot|s)$ and $\pi_Q^*(\cdot|s)$. 
Naturally, $\pi=\pi_Q^*$ itself is an optimal policy at state $s$ in the MaxEnt sense, since $D_{\text{KL}}(\pi\|\pi_Q^*)=0$. 

\begin{figure}[h]
    \centering
\includegraphics[width=0.95\linewidth]{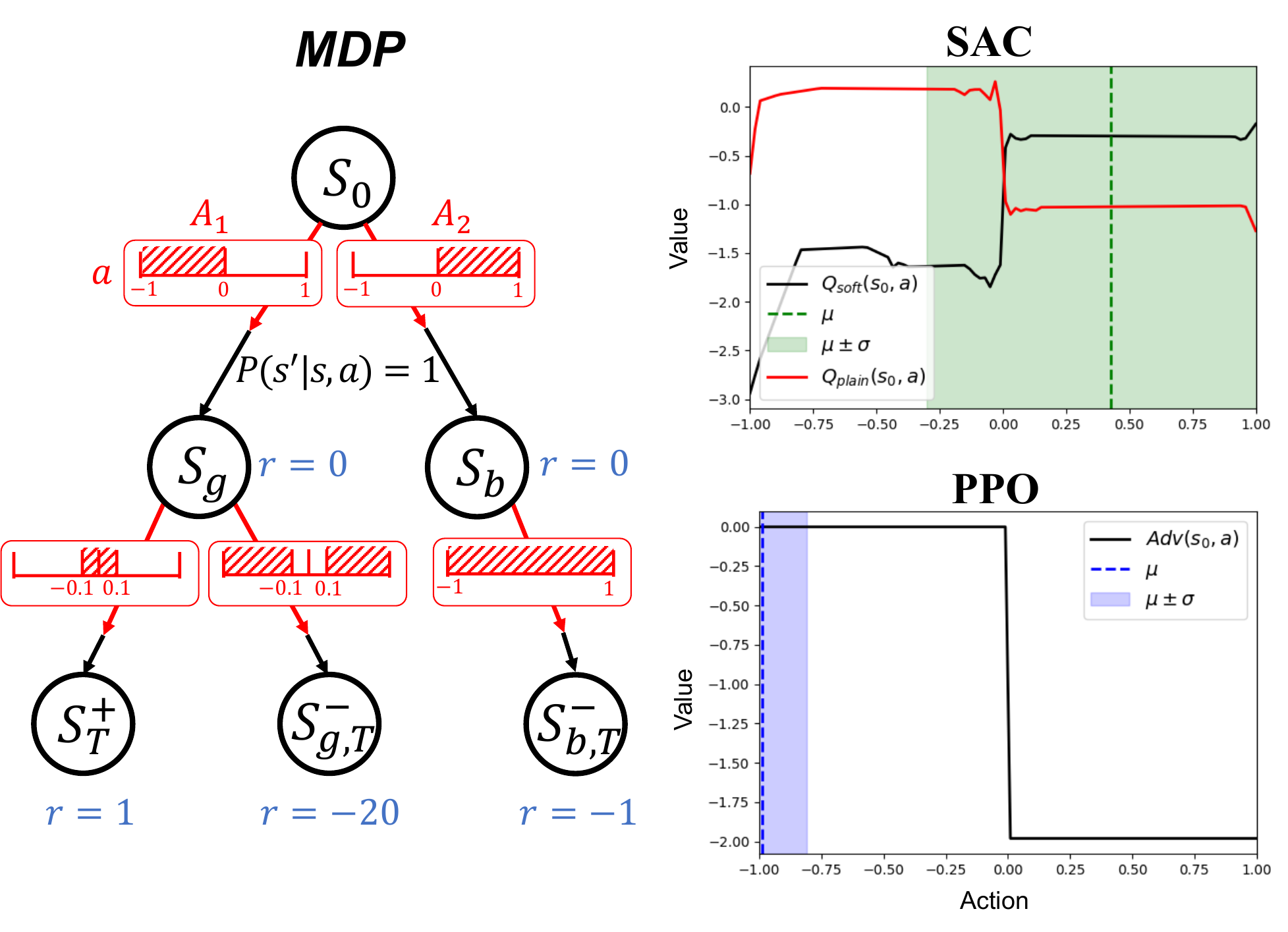}
\vspace{-.2cm}
    \caption{(\textbf{Left}) MDP in the Toy Example: The MDP consists of an initial state $s_0$ and two subsequent states $s_g$ (good) and $s_b$ (bad). It is clear that an optimal policy for $s_0$ should be centered in the left half of the action interval, since only $s_g$ can transit to the terminal state $s_T^+$ with positive reward. (\textbf{Right}) Learning results of SAC and PPO at $s_0$ at convergence. In the SAC plot, 
    the soft Q-values $Q(s_0,a)$ is higher for actions leading to $s_b$, results in an incorrect policy centered in $A_2$, the wrong action region ($\mu$: dashed green line, $\sigma$: green area). We also show the 
    learned Q-values 
    without entropy term 
    with separate networks
    (red line), showing higher values for actions leading to $s_g$. In the PPO plot, it learns the correct optimal policy.
    }
    \vspace{-.2cm}
    \label{fig:toy-ppo-sac}
\end{figure}

\section{A Toy Example}
\label{section:toy}

From the soft value definitions in MaxEnt, it is reasonable to expect {\em some} trade-off between return and entropy. But the key to understanding how it can affect the learning {\em at convergence} in major ways is by introducing intermediate states, where entropy shapes the soft values differently so that the MaxEnt-optimal policy is misled at an {\em upstream state}. We show a simple example to illustrate this effect. Consider an MDP (depicted in Figure~\ref{fig:toy-ppo-sac}) defined as follows:

\noindent$\bullet$ State space $S=\{s_0, s_g, s_b, s_{T}^+, s_{g,T}^-, s_{b,T}^-\}$. Here $s_0$ is the critical state for selecting actions. $s_g$ is the ``good" next state for $s_0$, and $s_b$ is the ``bad" next state, in the sense that only $s_g$ can transit to the terminal state $s_T^+$ with positive reward (under some subset of actions), while $s_b$ always transits to terminal state $s_{b,T}^-$ with negative reward. 

\noindent$\bullet$ Action space $A=[-1,1]$, a continuous interval in $\mathbb{R}$. 

\noindent$\bullet$ The transitions on $s_0$ are defined as follows, reflecting the intuition mentioned above in the state definition: 

\noindent -- At state $s_0$, any action in $A_1=[-1,0)$ leads to the good state deterministically, i.e., $\forall a\in A_1,\ P(s_g|s_0,a)=1$, while any action in $A_2=[0,1]$ leads to the bad state, i.e., $\forall a\in A_2,\ P(s_b|s_0,a)=1$. Note that allowing zero-dimensional overlap between the $A_1$ and $A_2$ with random transitions at overlapping points will not change the results. 

 \noindent -- The transitions from $s_g$: $P(s_T^+|s_g, [-0.1, 0.1])=1$ and $P(s_{g,T}^-|s_g, [-1, -0.1)\cup (0.1,1])=1$. That is, for any action in a small fraction of the action space, $a'\in[-0.1,0.1]$, we can transit to the positive-reward terminal state $s_T^+$. 

\noindent -- The transitions from $s_b$ from any action deterministically lead to the negative-reward terminal state, i.e. $\forall a'\in A, P(s_{b,T}^-|s_b,a')=1$. 

\noindent$\bullet$ The rewards are collected only at the terminal states, with $r(s_T^+)=1$, $r(s_{g,T}^-)=-20$ and $r(s_{b,T}^-)=-1$. The discount factor is set to $\gamma = 0.99$ and $\alpha$ is set to 1.

Given the MDP definition, it is clear that the ground truth policy at $s_0$ should allocate as much probability mass as possible (within the policy class being considered) for actions in the $A_1$ interval, because $A_1$ is the only range of actions that leads to $s_g$, and then has a chance of collecting positive rewards on the terminal state $s_T^+$, if the action on $s_g$ correctly taken. 

We can calculate analytically the soft Q values and the policy distributions that are MaxEnt-optimal (shown in Appendix~\ref{appendix:toy-calculation}). We can observe that the soft values of $s_g$ and $s_b$ will force MaxEnt to favor $s_b$ at $s_0$. Indeed, as shown in Figure~\ref{fig:toy-ppo-sac}, the SAC algorithm with Gaussian policy has its mean converged to the center of $A_2$. The black curve in the SAC plot shows the soft Q-values of the actions, showing how entropy affects the bias. On the other hand, PPO correctly captured the policy that favors the $A_1$ range. More detailed explanations are in Appendix~\ref{appendix:toy-calculation}.

We will illustrate how the misleading effect of entropy captured in the toy example can arise in realistic control problems through experimental results in Section~\ref{section:experiment}.

\section{Entropic Bifurcation Extension}
\label{section:theory}

Building on the intuition from the toy example, we introduce a general method for manipulating MaxEnt policy optimization. The method can target arbitrary states in any MDP and ``inject" special states with a special configuration of the soft Q-value landscape to mislead the MaxEnt-optimal policy. Importantly, the newly introduced states do not change the optimal policy on any non-targeted original states (in either the MaxEnt or non-MaxEnt sense), but can arbitrarily shape the MaxEnt-optimal policy on the targeted state. Thus, the technique can be applied on any number of states, and in the extreme case to change the entire MaxEnt-optimal policy to mirror the behavior of the worst possible policy on all states, thereby creating an arbitrarily large gap between the MaxEnt-optimal policy and the true optimal policy.

The key to our construction is to introduce new states that create a bifurcating soft Q-value landscape, such that the MaxEnt objective of probability matching biases the policy to favor states with low return and also can not transit back to desired trajectories, thus sabotaging learning. 

\begin{figure}[t]
    \centering
    \includegraphics[width=0.9\linewidth]{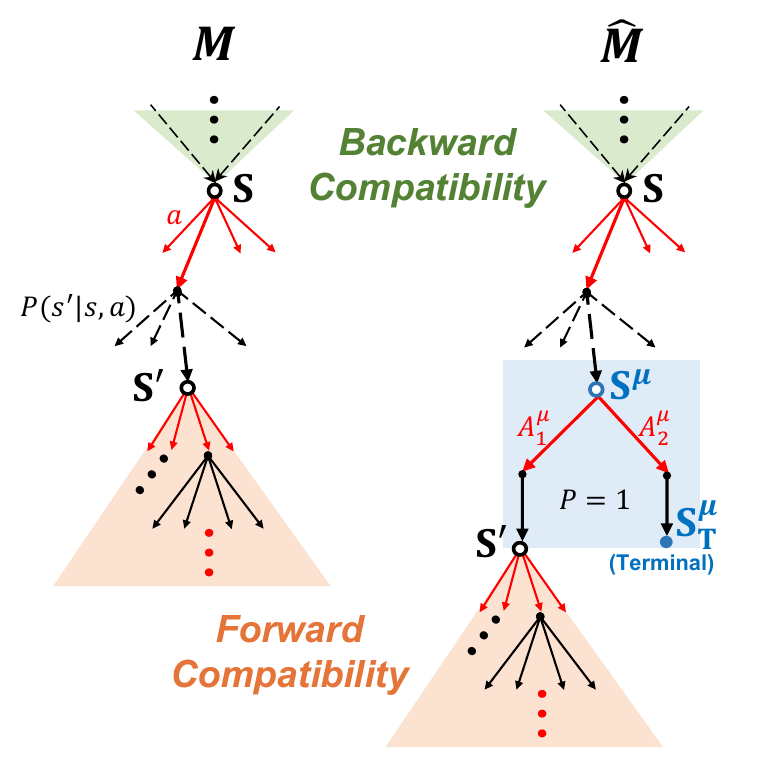}
    \caption{MDP $M$ and its entropic bifurcation extension $\hat{M}$. The extension captures the intuition in the toy example, by using additional intermediate states which specifically designed reward to mislead MaxEnt-optimal policies that match the soft-Q landscapes.}
    \label{fig:enter-label}
\end{figure}

\begin{definition}[Entropic Bifurcation Extension]
\label{def:bifurcation}
Consider an arbitrary MDP $M=\langle S_M, A_M, P_M, r_M, \gamma\rangle$ with continuous action space, the entropic bifurcation extension on $M$ at state $s$ is a set $\mathcal{E}(M,s)$ of new MDPs ${\hat M}$ of form:  
\[{\hat M}=\langle S_{{\hat M}}, A_{{\hat M}}, P_{{\hat M}}, r_{{\hat M}}, \gamma\rangle\in \mathcal{E}(M,s).\] 
constructed with the following steps (illustrated in Figure~\ref{fig:enter-label}):

1. We write $\mathcal{N}(s)=\{s'|P(s'|s,a)>0 \textrm{ for some }a\in A_M\}$ to denote the set of next states with non-zero transition probability from $s$. We introduce new states as follows:  
\begin{itemize}
    \item For any $s'\in \mathcal{N}(s)$, introduce a new state $s^{\mu}$ that has not appeared in the state space $S_M$ and let the correspondence between $s'$ and $s^{\mu}$ be marked as $s^{\mu}=\mu(s')$. We can then write $S^{\mu}=\mu(\mathcal{N}(s))$ for the set of all new states that are introduced in this way for state $s$. 
    \item At the same time, for each $s^{\mu}$, we introduce a fresh state $s^{\mu}_T$ that is a new terminal state, and write the set of such newly introduced terminal states as $S^{\mu}_T$.
\end{itemize}
We now let the state space of $\hat M$ be $S_{\hat M}=S_M\cup S^{\mu} \cup S^\mu_T$. Note that $S^{\mu}$, $S^\mu_T$, and $S_M$ are always disjoint. 
    
2. At the original state $s$, for each $s'\in \mathcal N(s)$, we now set $P_{{\hat M}}(s'|s,a)=0$ and $P_{{\hat M}}(\mu(s')|s,a)=P_M(s'|s,a)$. That is, we delay the transition to $s'$ and let the newly introduced state $s^\mu=\mu(s')$ take over the same transitions. 
For all other states in $S_M\setminus\{s\}$, the transitions in $M$ and ${\hat M}$ are the same.

3. For all the newly introduced states in $S^{\mu}$, their action space is a new $A^{\mu}\subseteq \mathbb{R}$. At each $s^\mu\in S^{\mu}$, we will define transitions on two disjoint intervals, i.e., $A^{s^\mu}_1\cup A^{s^\mu}_2\subseteq A^{\mu}$ and $A^{s^\mu}_1\cap A^{s^\mu}_2=\emptyset$. This partitioning is $s^\mu$-dependent (they will be used to tune the soft-Q landscapes), but for notational simplicity we will just write $A^{\mu}_1$ and $A^{\mu}_2$ when possible. Overall, the new MDP $\hat M$ has action space $A_{\hat M}=A_{M}\cup A^{\mu}$.

4. At each $s^{\mu}$, the transitions are defined to produce bifurcation behaviors, as follows. For any $a\in A_1^\mu$, $P(s'|s^{\mu},a)=1$. That is, such actions deterministically lead back to the $s'$ in the original MDP. On the other side, any $a\in A^{\mu}_2$ leads the the new terminal state,  $P(s^{\mu}_T|s^{\mu},a)=1$. That is, the two intervals of action introduce bifurcation into two next states, both deterministically. This design generalizes the construction in the toy example, splitting the action space into one part that recovers the original MDP, and a second part that leads to non-recoverable suboptimal behaviors. 
    
5. The reward function $r_{{\hat M}}$ is the same as $r_{M}$ on all the original states and actions, i.e., $r_{{\hat M}}(s,a)=r_{M}(s,a)$ for all $(s,a)\in S_M\times A_M$. The reward on the new state is $r_{\hat M}(s^{\mu},a)=0$ for any action $a\in A^{\mu}_2$. On the new terminal state $s_T^{\mu}$, we can choose $r(s_T^{\mu})$ to shape the soft-Q values as needed. The same discount $\gamma$ is shared between $M$ and ${\hat M}$. 
\end{definition}

\begin{notation}
The construction above defines the set of all possible bifurcation extensions $\mathcal{E}_s(M)$. For any specific instance, the only tunable parameters are the size of $|A_1^\mu|$ and $|A^{\mu}_2|$, as well as rewards on the newly introduced states $s^{\mu}$ and $s^{\mu}_T$. We will show that these parameters already give enough degrees of freedom to shape the soft $Q$-value landscapes and policy at the target state $s$. 

\end{notation}

Our main theorem relies on two important lemmas, which guarantee that we can use the newly introduced states to arbitrarily shape the policy at the targeted state $s$, without affecting the policy at another state in the original MDP. 
\begin{itemize}
    \item {\em Backward compatibility} ensures that any fixed value of $V(s)$ can remain invariant, by finding an appropriate soft $Q$-value landscape at $s$. Importantly, this $Q$-landscape can be shaped to match an arbitrary policy distribution at $s$. 
    \item {\em Forward compatibility} ensures that by choosing appropriate $r(s_\mu^T), |A^{\mu}_1|, |A^{\mu}_2|$, the MaxEnt optimal value on $V(s^\mu)$ can match arbitrary target values, using the original values on the original next states $s'\in \mathcal{N}(s)$.    
\end{itemize}
These two properties ensure the feasibility of shaping the MaxEnt-optimal policy at the targeted state $s$ without affecting the policy on any other states of the original MDP. Formally, the lemmas can be stated as follows and the proofs are in the Appendix:

\begin{figure*}[h!t]
    \centering
    \includegraphics[width=1\linewidth]{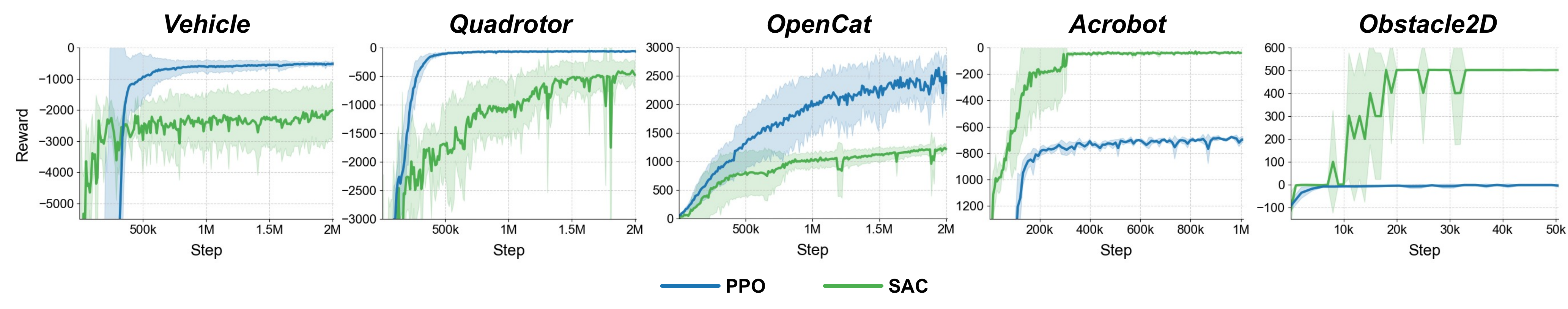}
    \vspace{-.4cm}
    \caption{\textbf{Reward performance} of SAC and PPO across five environments with five random seeds. Note that we choose to show SAC and PPO because they are the best representatives of MaxEnt and non-MaxEnt algorithms.}
    \label{fig:performance-all}
\end{figure*}

\begin{lemma}[Backward Compatibility]
\label{lemma:backward}
Let \( \pi(\cdot|s): A_M \to [0,1] \) be an arbitrary policy distribution over the action space at the targeted state \( s \). Let $v_s\in\mathbb{R}$ be an arbitrary desired value for state $s$. There exists a value function $V:S^{\mu}\rightarrow \mathbb{R}$ on all the newly introduced states $s^{\mu}$ such that $v_s$ is the optimal soft value of $s$ under the MaxEnt-optimal policy at $s$ (Definition~\ref{eq:softvalue}).  
\end{lemma}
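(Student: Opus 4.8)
The plan is to reduce the lemma to inverting the relationship between the soft $Q$-landscape at $s$ and the pair (MaxEnt-optimal policy, soft value). First I would record the closed form of the soft value under the Boltzmann-optimal policy: if $\pi^*_Q(a|s)\propto \exp(\alpha^{-1}Q(s,a))$, then substituting $\log\pi^*_Q = \alpha^{-1}Q - \log Z$ into the soft value definition collapses the reward and entropy terms and yields $V(s)=\alpha\log\int_{A_M}\exp(\alpha^{-1}Q(s,a))\,\dd a = \alpha\log Z$. Combining this with the Boltzmann form gives the clean inversion $Q(s,a)=V(s)+\alpha\log\pi^*_Q(a|s)$. Hence, to realize the target policy $\pi(\cdot|s)$ together with the target value $v_s$, it is necessary and sufficient to realize the $Q$-landscape $Q(s,a)=v_s+\alpha\log\pi(a|s)$; one checks consistency directly, since this $Q$ integrates to $Z=e^{\alpha^{-1}v_s}$, which returns $V(s)=v_s$ and $\pi^*_Q=\pi$.

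Next I would write $Q(s,\cdot)$ in the extended MDP in terms of the unknowns. Because step~2 of the construction reroutes every transition from $s$ through the fresh state $\mu(s')$ with the original probability $P_M(s'|s,a)$, the backup reads $Q(s,a)=r_M(s,a)+\gamma\sum_{s'\in\mathcal N(s)}P_M(s'|s,a)\,V(\mu(s'))$, where the $V(\mu(s'))$ may be treated as free scalars (their realizability is exactly what Forward Compatibility supplies, so here I only need existence over $\mathbb R^{|\mathcal N(s)|}$). Matching this to the target gives, for every action $a$, the linear equation $r_M(s,a)+\gamma\sum_{s'}P_M(s'|s,a)V(\mu(s'))=v_s+\alpha\log\pi(a|s)$. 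The structural observation that decouples the two targets is shift-invariance: since $\sum_{s'}P_M(s'|s,a)=1$, replacing each $V(\mu(s'))$ by $V(\mu(s'))+c$ shifts $Q(s,\cdot)$ by the constant $\gamma c$, which leaves the Boltzmann policy unchanged while moving $V(s)$ by exactly $\gamma c$. Thus I would first solve for the \emph{relative} values that reproduce the shape $\alpha\log\pi(a|s)$, and then apply a single uniform shift $c=(v_s-V(s))/\gamma$ to pin the soft value to $v_s$ without disturbing the policy.

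It remains to solve for the shape, and this is where the transition geometry at $s$ enters and is the main obstacle to a literally \emph{arbitrary} $\pi$. On the deterministic bifurcation geometry used throughout the construction (and in the toy example), each action lies in exactly one basin $B_{s'}=\{a:P_M(s'|s,a)=1\}$, so the equation on $B_{s'}$ collapses to $r_M(s,a)+\gamma V(\mu(s'))=v_s+\alpha\log\pi(a|s)$; this is solvable for the single scalar $V(\mu(s'))$ precisely when $\alpha\log\pi(a|s)-r_M(s,a)$ is constant across $B_{s'}$, i.e. when $\pi$ is piecewise-uniform over the basins (automatic when $r_M(s,\cdot)\equiv 0$ there). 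I would therefore make precise that the realizable policy class is exactly this span, set each $V(\mu(s'))$ to the corresponding basin constant, and conclude existence. The delicate point to argue carefully is the degrees-of-freedom count: the number of basins equals the number of unknowns $\{V(\mu(s'))\}_{s'\in\mathcal N(s)}$, the shape constraints fix these up to the global shift, and the one remaining shift exactly realizes the value target $v_s$, so the system is consistent and the claimed $V$ exists. This also suffices for the main theorem, since misleading MaxEnt only requires reallocating probability mass across the good and bad basins, which lies squarely within this realizable class.
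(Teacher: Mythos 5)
Your first paragraph is, in substance, the paper's entire proof of this lemma: the paper performs the same inversion of the Boltzmann relation to obtain $Q(s,a)=\alpha\log\pi(a|s)+v_s$, checks that this gives $D_{\text{KL}}(\pi(\cdot|s)\,\|\,\pi_Q^*)=0$ and $V(s)=v_s$, and stops there. Where you diverge is in your second and third paragraphs, and the divergence is substantive. The paper never verifies that this target landscape $Q(s,\cdot)$ is actually realizable by choosing values on the newly introduced states: in the extension of Definition~\ref{def:bifurcation}, $Q(s,a)=r_M(s,a)+\gamma\sum_{s'\in\mathcal N(s)}P_M(s'|s,a)\,V(\mu(s'))$, so the reachable landscapes form a family with only $|\mathcal N(s)|$ degrees of freedom, whereas the lemma quantifies over arbitrary densities $\pi(\cdot|s)$ — a continuum of constraints. (The paper defers this connection to the proof of Theorem~\ref{theoremmain}, where the formula $V(s^{\mu})=Q(s,a)/P(s'|s,a)$ both drops $r_M$ and $\gamma$ and implicitly requires $Q(s,\cdot)$ to be constant on each basin.) Your shift-invariance decomposition and basin analysis are correct: with deterministic bifurcating transitions the realizable MaxEnt-optimal policies at $s$ are exactly those whose restriction to each basin $B_{s'}$ is proportional to $\exp(\alpha^{-1}r_M(s,a))$ (uniform when the reward is flat there), with only the allocation of mass across basins being tunable; a literally arbitrary $\pi(\cdot|s)$ is unattainable whenever several actions share a successor. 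So your proposal is not just a different route — it is more rigorous than the paper's own argument and exposes a genuine overstatement in the lemma (inherited by the ``arbitrary distribution'' phrasing of Theorem~\ref{theoremmain}), while supplying the weaker statement the paper actually uses: MaxEnt probability mass can be reallocated arbitrarily across successor basins.

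One caveat on your closing claim that this restricted class ``suffices for the main theorem'': that is clearly true for the misleading phenomenon itself (the toy example and the value gap only require pushing mass onto bad basins), but for the corollary that the MaxEnt-optimal policy attains the worst value $J^-$ you would additionally need to argue that within-basin uniformity does not change the attained return — i.e., that the return through a basin is governed by the (tunable, via Lemma~\ref{lemma:forward}) bifurcation-state values rather than by which action inside the basin is taken, which fails if the original worst policy must concentrate on specific actions within a basin whose $Q$-values differ. That step should be spelled out rather than asserted.
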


\begin{lemma}[Forward Compatibility]
\label{lemma:forward}
Let $s^{\mu} = \mu(s')$ be the newly introduced state for $s'\in \mathcal{N}(s)$. Let $V(s')$ be an arbitrarily fixed value for the original next state $s'$, and $r(s^{\mu},a)$ an arbitrary reward for the newly introduced state $s^\mu$. Let $v\in \mathbb{R}$ be an arbitrary target value. Then, there exist choices of $A^{\mu}_1$, $A^{\mu}_2$, and $r(s^{\mu}_T)$ such that $V(s^{\mu})=v$ is the optimal soft value for the bifurcating state $s^{\mu}$.  
\end{lemma}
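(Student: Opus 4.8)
The plan is to compute the optimal soft value $V(s^\mu)$ in closed form under the Boltzmann policy and then show that the tunable quantities $|A_1^\mu|$, $|A_2^\mu|$, and $r(s^\mu_T)$ supply enough freedom to hit any target $v$. Recall that under the MaxEnt-optimal policy the soft value is the log-partition $V(s^\mu)=\alpha\log\int_{A^\mu}\exp(\alpha^{-1}Q(s^\mu,a))\,\dd a$. By the bifurcation construction the $Q$-values at $s^\mu$ split across the two action regions: for $a\in A_1^\mu$ the transition is deterministically back to $s'$, so $Q(s^\mu,a)=r(s^\mu,a)+\gamma V(s')$; for $a\in A_2^\mu$ the reward is $0$ and the transition is deterministically to the terminal state $s^\mu_T$ (whose value equals its reward), so $Q(s^\mu,a)=\gamma\, r(s^\mu_T)$.

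First I would substitute these two expressions into the soft-value integral and factor out the fixed parts. Writing $C_1=\int_{A_1^\mu}\exp(\alpha^{-1}r(s^\mu,a))\,\dd a$ and $m_2=|A_2^\mu|$, exponentiating the soft value turns it into a sum of two non-negative terms,
\[
\exp(\alpha^{-1}V(s^\mu)) = C_1\,\exp(\alpha^{-1}\gamma V(s')) + m_2\,\exp(\alpha^{-1}\gamma\, r(s^\mu_T)).
\]
Since $V(s')$ is fixed, the coefficient $\exp(\alpha^{-1}\gamma V(s'))$ is a fixed positive constant, while $C_1$ is controlled by the choice of $A_1^\mu$ and $r(s^\mu,\cdot)$.

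The target is then to make the right-hand side equal the fixed positive number $\exp(\alpha^{-1}v)$. The key observation is that each term sweeps all of $(0,\infty)$ as the parameters vary: for fixed $m_2>0$ and $\gamma>0$ the $A_2^\mu$-term $m_2\exp(\alpha^{-1}\gamma r(s^\mu_T))$ is strictly increasing and surjective onto $(0,\infty)$ in $r(s^\mu_T)$, while $C_1\to 0$ as $|A_1^\mu|\to 0$. So I would (i) pick $A_1^\mu$ small enough that $C_1\exp(\alpha^{-1}\gamma V(s'))<\exp(\alpha^{-1}v)$, which is always possible because the right side is a fixed positive number; and (ii) fix any $m_2>0$ and solve the residual equation for the terminal reward, giving the explicit value $r(s^\mu_T)=\alpha\gamma^{-1}\log\!\big((\exp(\alpha^{-1}v)-C_1\exp(\alpha^{-1}\gamma V(s')))/m_2\big)$, whose logarithm argument is positive precisely by step (i). Back-substituting yields $V(s^\mu)=v$, as required.

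The only genuine subtlety — and the step I would be most careful about — is guaranteeing that the $A_1^\mu$-contribution does not already overshoot the target when $v$ is very small (so that $\exp(\alpha^{-1}v)$ is tiny); this is exactly what step (i) controls, using that $C_1$ can be driven to zero by shrinking the measure of $A_1^\mu$, relying on $r(s^\mu,\cdot)$ being bounded (hence integrable) on the chosen interval. I would also flag the degenerate case $\gamma=0$, where $r(s^\mu_T)$ no longer enters through the factor $\gamma$ and one instead matches the target by adjusting $m_2$ directly; outside this case the construction above is complete.
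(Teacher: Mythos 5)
Your proof is correct and follows essentially the same route as the paper's: both reduce the MaxEnt-optimal soft value to the log-partition $V(s^\mu)=\alpha\log\bigl(|A_1^\mu|e^{Q_1/\alpha}+|A_2^\mu|e^{Q_2/\alpha}\bigr)$ over the two bifurcation branches and then invert this for $r(s^\mu_T)$, arriving at the identical closed-form expression. Your step (i) — shrinking $|A_1^\mu|$ so that the $A_1^\mu$-contribution stays below $e^{v/\alpha}$, ensuring the logarithm's argument is positive — makes explicit a constraint the paper handles only implicitly via its surjectivity remark, and your note on the degenerate case $\gamma=0$ is a sound minor addition.
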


The lemmas ensure that for any transition in the original state, $(s,a,s')$, and for any value $V^{\pi_M}_M(s)$ and $V^{\pi_M}_M(s')$ in $M$ under some policy $\pi_M$, there exist parameter choices in the bifurcation extension that maintains the same values of $V_M(s)$ and $V_M(s')$, while shaping the MaxEnt-optimal policy arbitrarily. Consequently, the bifurcation extension can create an arbitrary value gap between the MaxEnt-optimal policy and the ground truth optimal policy at the targeted state $s$. This leads to the main theorem:
\begin{theorem}[Bifurcation Extension Misleads MaxEnt RL]\label{theoremmain}
Let $M$ be an MDP with optimal MaxEnt policy $\pi^*$, and $s$ an arbitrary state in $S_M$. Let $\pi(\cdot|s)$ be an arbitrary distribution over the action space $A_M$ at state $s$. We can construct an entropy bifurcation extension ${\hat M}$ of $M$ such that ${\hat M}$ is equivalent to $M$ restricted to $S_M\setminus\{s\}$ and does not change its optimal policy on those states, while the MaxEnt-optimal policy at $s$ after entropy bifurcation extension can follow an arbitrary distribution $\pi(\cdot|s)$ over the actions.  
\end{theorem}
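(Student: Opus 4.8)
The plan is to prove the theorem by \emph{composing} the two compatibility lemmas, treating the structural invariance on $S_M\setminus\{s\}$ as a direct consequence of Definition~\ref{def:bifurcation}. Concretely, I want to exhibit a single value function $\hat V$ on $S_{\hat M}$ that (a) agrees with the original $V_M$ on every state of $S_M$, (b) takes prescribed values on the inserted states $S^{\mu}$, and (c) is a fixed point of the soft Bellman backup of $\hat M$. If such a $\hat V$ is the MaxEnt fixed point, then the policy it induces on $S_M\setminus\{s\}$ coincides with $M$'s, while the Boltzmann policy $\pi^*_Q(a|s)\propto\exp(\alpha^{-1}Q_{\hat{M}}(s,a))$ at the target state is exactly the prescribed $\pi(\cdot|s)$.

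First I would handle the equivalence on the non-targeted states. By steps 2 and 5 of Definition~\ref{def:bifurcation}, the extension only redirects the outgoing transitions of $s$ through the fresh intermediate states $S^{\mu}$ and appends the new terminals $S^{\mu}_T$; every state in $S_M\setminus\{s\}$ retains its original reward and transition kernel verbatim. Hence the sub-MDP on $S_M\setminus\{s\}$ is literally the same as in $M$, and I claim each original successor value $V(s')$ for $s'\in\mathcal N(s)$ can be held at its $M$-value $V_M(s')$. In the acyclic case this is immediate; the only nuance is cyclic reachability back to $s$, which I would absorb by using Forward Compatibility below to pin each $V(s')$ to its original level, so that the soft Bellman fixed point restricted to the original states is unperturbed. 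Both the MaxEnt and non-MaxEnt optimal policies on $S_M\setminus\{s\}$ then coincide with those of $M$.

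Next I would invoke Backward Compatibility (Lemma~\ref{lemma:backward}) with the given target $\pi(\cdot|s)$ and the choice $v_s=V_M(s)$, obtaining a target assignment of soft values $\{v_{s^{\mu}}\}_{s^{\mu}\in S^{\mu}}$ on the intermediate states for which the induced MaxEnt-optimal policy at $s$ equals $\pi(\cdot|s)$ while $V(s)=v_s$ is held invariant. I would then realize these targets via Forward Compatibility (Lemma~\ref{lemma:forward}): for each $s^{\mu}=\mu(s')$, keeping $V(s')=V_M(s')$ fixed, apply the lemma with target $v=v_{s^{\mu}}$ to select concrete $|A^{\mu}_1|,|A^{\mu}_2|$ and $r(s^{\mu}_T)$ making $V(s^{\mu})=v_{s^{\mu}}$ the MaxEnt-optimal soft value of $s^{\mu}$. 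Because each inserted state depends downstream only on its single fixed successor value $V(s')$, these tunings are independent of one another and of the original dynamics.

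The hard part will be the final gluing step, i.e.\ verifying global self-consistency rather than merely collecting locally valid equations. I must ensure that the value Backward Compatibility \emph{demands} on each $s^{\mu}$ is exactly the value Forward Compatibility can \emph{deliver}, and that the assembled $\hat V$ (namely $V_M$ on $S_M$, the $v_{s^{\mu}}$ on $S^{\mu}$, and the terminal values on $S^{\mu}_T$) is a genuine fixed point of the soft backup of $\hat M$. I would discharge this by noting that Forward Compatibility produces an arbitrary real target value, hence in particular the $v_{s^{\mu}}$ requested by Backward Compatibility, and that the soft backup decouples across the inserted layer since each $s^{\mu}$ has a single original-successor dependence; the original states satisfy the backup by the first step, the intermediate states by Lemma~\ref{lemma:forward}, and the target state by Lemma~\ref{lemma:backward}. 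At that fixed point the MaxEnt-optimal policy at $s$ is precisely $\pi(\cdot|s)$, establishing the claim.
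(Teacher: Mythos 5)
Your proposal follows essentially the same route as the paper's proof: apply Backward Compatibility (Lemma~\ref{lemma:backward}) at $s$ with $v_s=V_M(s)$ to shape a $Q$-landscape whose Boltzmann policy is the prescribed $\pi(\cdot|s)$, convert those $Q$-values into target soft values on the inserted states $s^{\mu}$, realize each target via Forward Compatibility (Lemma~\ref{lemma:forward}) while holding $V(s')$ at its original value, and conclude invariance on $S_M\setminus\{s\}$ because neither $V(s)$ nor any $V(s')$ is altered. Your extra attention to the final gluing step (fixed-point consistency and cycles back through $s$) only makes explicit what the paper's shorter argument leaves implicit, so the two proofs coincide in substance.
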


\begin{proposition}[Bifurcation Extension Preserves Optimal Policies]
\label{proposition:ppo}
By setting $r(s^{\mu},a)=(1-\gamma)V(s')$ for every newly introduced bifurcating state $s^{\mu}=\mu(s')$ and $a\in A_1^\mu$, the optimal policy is preserved under bifurcation extension.
\end{proposition}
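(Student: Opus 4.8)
Since the proposition concerns the ordinary (non-MaxEnt) optimal policy rather than the soft-value landscape, the plan is to work with the standard optimal value function $V^*_M$ of $M$ and to exhibit an explicit candidate value function $\hat V$ on $\hat M$ that I then verify satisfies the standard Bellman optimality equations of $\hat M$. Because the discounted Bellman optimality operator is a contraction with a unique fixed point when $\gamma<1$, verifying the fixed-point equations identifies $\hat V$ as $V^*_{\hat M}$ and its greedy policy as an optimal policy of $\hat M$. Concretely I would set $\hat V(\bar s)=V^*_M(\bar s)$ for every original state $\bar s\in S_M$, set $\hat V(s^\mu)=V^*_M(s')$ for each bifurcating state $s^\mu=\mu(s')$, and fix $\hat V$ on $S^\mu_T$ from the terminal rewards. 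Reading off the greedy policy on $S_M$ then shows the optimal policy there is unchanged, since the per-action values at original states are reproduced exactly.

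The crux is the local check at each bifurcating state $s^\mu$. Any action $a\in A_1^\mu$ gives the one-step value $r_{\hat M}(s^\mu,a)+\gamma \hat V(s')=(1-\gamma)V^*_M(s')+\gamma V^*_M(s')=V^*_M(s')$, which is precisely the telescoping that the chosen reward $(1-\gamma)V(s')$ is designed to produce: the forced one-step detour through $s^\mu$ and back to $s'$ contributes exactly the same discounted return as reaching $s'$ directly in $M$. Any $a\in A_2^\mu$ gives $r_{\hat M}(s^\mu,a)+\gamma \hat V(s^\mu_T)=\gamma \hat V(s^\mu_T)$, using $r_{\hat M}(s^\mu,\cdot)=0$ on $A_2^\mu$. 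Hence $\hat V(s^\mu)=\max\{V^*_M(s'),\,\gamma \hat V(s^\mu_T)\}$, and the candidate value $V^*_M(s')$ is correct precisely when $\gamma \hat V(s^\mu_T)\le V^*_M(s')$, in which case $A_1^\mu$ is an optimal action and the original successor $s'$ is recovered.

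With $\hat V(s^\mu)=V^*_M(s')$ established, the equation at the target state $s$ is immediate: because $P_{\hat M}(\mu(s')\mid s,a)=P_M(s'\mid s,a)$ and $r_{\hat M}(s,a)=r_M(s,a)$, the one-step lookahead from $s$ reproduces $r_M(s,a)+\gamma\sum_{s'}P_M(s'\mid s,a)V^*_M(s')$ action-by-action, so both $\hat V(s)=V^*_M(s)$ and the $\arg\max$ over $A_M$ at $s$ coincide with those of $M$. At every other original state the rewards and transitions are literally inherited from $M$, so the optimality equations hold there with $\hat V=V^*_M$ by definition of $V^*_M$; the fixed-point framing is what makes this sound even when trajectories loop back through $s$ and $s^\mu$, avoiding any circular value argument.

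The main obstacle is discharging the inequality $\gamma \hat V(s^\mu_T)\le V^*_M(s')$ that keeps $A_1^\mu$ optimal at $s^\mu$; this is the single place where the otherwise-free terminal reward $r(s^\mu_T)$ must be constrained. I would stress that this constraint is fully compatible with—indeed is the mechanism behind—the misleading construction in Theorem~\ref{theoremmain}: there the MaxEnt bias toward $A_2^\mu$ arises because the entropy mass of a large $A_2^\mu$ inflates the \emph{soft} value of $s^\mu$, whereas the \emph{hard} value $\gamma\hat V(s^\mu_T)$ of that same branch stays below $V^*_M(s')$. Consequently the standard optimal policy never follows the bifurcation into the trap, which is exactly the separation between MaxEnt-optimal and ground-truth-optimal behavior that the proposition is meant to certify.
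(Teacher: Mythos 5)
Your proof is correct and rests on the same telescoping identity as the paper's---$(1-\gamma)V(s')+\gamma V(s')=V(s')$, so the forced detour through $s^\mu$ is value-neutral---but your verification strategy is genuinely more careful and exposes something the paper's one-line argument glosses over. The paper's proof simply asserts that ``in the non-MaxEnt setting \ldots\ the optimal policy chooses the actions in $A_1^\mu$'' and concludes $V(s^\mu)=V(s')$, whereas your explicit candidate $\hat V$ plus Bellman-optimality fixed-point check makes visible that this assertion is only true under the side condition $\gamma\,\hat V(s^\mu_T)\le V^*_M(s')$: the proposition as stated constrains only $r(s^\mu,a)$ on $A_1^\mu$, and if $r(s^\mu_T)$ were chosen large, the hard-optimal policy would follow the $A_2^\mu$ branch, $\hat V(s^\mu)$ would exceed $V^*_M(s')$, and the optimal policy at $s$ (and upstream) could change. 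Your final paragraph also correctly discharges this condition in the context where the proposition is actually used: in the Forward Compatibility construction one has $Q_2=\gamma r(s^\mu_T)=\alpha\log\bigl((e^{v/\alpha}-|A_1^\mu|e^{Q_1/\alpha})/|A_2^\mu|\bigr)$, so the soft value of $s^\mu$ can be inflated arbitrarily by enlarging $|A_2^\mu|$ while $Q_2$ is driven down, meaning the entropy trap can always be built with $\gamma r(s^\mu_T)\le V^*_M(s')$ intact. In short, the two proofs share the same mechanism, but yours buys rigor (uniqueness of the fixed point handles loops back through $s$ without circularity) and identifies the implicit hypothesis that the paper's proof---and arguably the proposition statement itself---should make explicit.
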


Now, since this construction can alter the policy at any state without affecting other states, it can be independently used at any number of states simultaneously, and alter the entire policy of the MDP. In particular, the bifurcation extensions can force the MaxEnt-optimal policy to match the worst policy in the original MDP. 

\begin{corollary}
\label{corollary: worst-policy}
Let $M$ be an MDP whose optimal policy has value $J^+$ and its worst policy (minimizing return) has value $J^-$. By applying entropy bifurcation extension on $M$ on all states in $M$, we can obtain an MDP $\hat M$ whose MaxEnt-optimal policy has value $J^-$ while its ground-truth optimal policy still has value $J^+$. 
\end{corollary}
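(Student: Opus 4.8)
The plan is to reduce the corollary to a simultaneous, state-by-state application of Theorem~\ref{theoremmain} together with the value-preserving reward choice of Proposition~\ref{proposition:ppo}. First I would fix a worst policy $\pi^-$ of $M$, i.e. a (without loss of generality deterministic) policy attaining the minimal return $J^-$, and record its action distribution $\pi^-(\cdot|s)$ at every state $s\in S_M$. The target throughout is to drive the MaxEnt-optimal behavior on the original states to coincide with $\pi^-$ while leaving the non-entropic optimal value untouched.

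Next I would apply the bifurcation extension at \emph{every} state $s\in S_M$ at once, and the crucial point to establish here is non-interference. By Definition~\ref{def:bifurcation} each application at $s$ introduces only fresh states $S^\mu$ and terminals $S^\mu_T$ that are disjoint from $S_M$ and from the fresh states created for any other target state, and it only redirects the \emph{outgoing} transitions of $s$ through $\mu(s')$. Hence the per-state constructions compose into a single MDP $\hat M$ in which the original transition $s\to s'$ is replaced by the two-step path $s\to\mu(s')\to s'$, and the shaping freedom available at one state (the interval sizes $|A^\mu_1|,|A^\mu_2|$ and the terminal reward $r(s^\mu_T)$) is independent of that at any other. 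This independence is exactly what lets me invoke Theorem~\ref{theoremmain} at each state separately with target distribution $\pi(\cdot|s)=\pi^-(\cdot|s)$, so that the MaxEnt-optimal policy of $\hat M$ reproduces $\pi^-$ on the whole original state space.

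I would then settle the two value claims. For the ground-truth side, I would adopt the recovery reward $r(s^\mu,a)=(1-\gamma)V(s')$ from Proposition~\ref{proposition:ppo}; this makes each intermediate state value-transparent, since the recovering branch contributes $(1-\gamma)V(s')+\gamma V(s')=V(s')$, so the non-entropic value of every original state, and in particular the optimal return $J^+$, is preserved. Crucially this reward spends a different degree of freedom than the soft-$Q$ shaping of Lemmas~\ref{lemma:backward} and~\ref{lemma:forward} (which use $|A^\mu_1|,|A^\mu_2|$ and $r(s^\mu_T)$), so both requirements hold in the same $\hat M$. For the MaxEnt side, the soft values $V(s^\mu)$ are inflated by the entropy benefit of the bifurcation so that, by backward compatibility, the MaxEnt-optimal action distribution at each upstream $s$ is driven to $\pi^-(\cdot|s)$ while the policy at $s^\mu$ is kept recovery-dominant; the induced trajectory then tracks $\pi^-$ through the original states, and by value transparency its plain return equals $J^-$.

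The hard part will be the global consistency of the soft-value assignment: forward compatibility fixes $V(s^\mu)$ in terms of $V(s')$ and backward compatibility fixes $V(s)$, so chaining the construction over all states creates a recursive dependence of the target soft values along trajectories. I would resolve this by choosing the target soft values to be those obtained by soft policy evaluation of $\pi^-$ on the original states — a fixed, self-consistent assignment — and then using the free terminal rewards $r(s^\mu_T)$ as the slack variables guaranteed by Lemma~\ref{lemma:forward} to realize each $V(s^\mu)$ independently. The disjointness of the injected states ensures these local choices never collide, which closes the argument; the only remaining care is to verify that recovery-dominance at each $s^\mu$ is compatible with the inflated $V(s^\mu)$ needed upstream, so that the MaxEnt trajectory genuinely realizes $J^-$ rather than leaking return into the bifurcation terminals.
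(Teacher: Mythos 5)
Your overall route is exactly the paper's: the paper gives no standalone proof of this corollary, treating it as immediate from Theorem~\ref{theoremmain} applied at every state (with the worst policy $\pi^-(\cdot|s)$ as the target distribution) together with Proposition~\ref{proposition:ppo}, and your explicit fill-ins -- non-interference via disjointness of the injected states, the independence of the tunable parameters, and the need for a single self-consistent assignment of target soft values -- are correct elaborations of what the paper leaves implicit.

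There is, however, a genuine gap in your final paragraph, precisely at the point you flag as ``remaining care,'' and it is worth naming because the two desiderata you state there are mutually incompatible. At a bifurcation state the MaxEnt-optimal policy is Boltzmann, so the probability of the recovery branch is $p_1 = \exp\bigl((Q_1 + \alpha\log|A_1^\mu| - V(s^\mu))/\alpha\bigr)$. If $V(s^\mu)$ is inflated above $Q_1+\alpha\log|A_1^\mu|$ through the terminal branch -- which is how the proof of Lemma~\ref{lemma:forward} achieves arbitrary targets -- then $p_1\to 0$, i.e., the MaxEnt trajectory leaks into the terminals $s_T^\mu$ exactly at the bifurcation states that the misled policy actually visits; ``recovery-dominance'' cannot hold there. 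One can instead inflate $V(s^\mu)$ by making $|A_1^\mu|$ itself large (keeping $p_1\approx 1$), but then a second contamination appears: the compensation reward $r(s^\mu,a)=(1-\gamma)V(s')$ of Proposition~\ref{proposition:ppo} must be pegged to the \emph{optimal} values of $M$ in order to preserve $J^+$, and it is collected by the MaxEnt policy on every pass through $s^\mu$ as well; together with the doubled discounting along the two-step paths $s\to s^\mu\to s'$, this makes the MaxEnt policy's plain return in $\hat M$ differ from $J^-$ in general. So the literal claim ``the MaxEnt-optimal policy has value $J^-$ as a return in $\hat M$'' is not what the construction delivers. What it does deliver -- and how the corollary and the paper's one-sentence justification must be read -- is that the MaxEnt-optimal policy of $\hat M$ coincides with the worst policy $\pi^-$ on all of $S_M$, and $\pi^-$ has value $J^-$ when evaluated in $M$. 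Under that reading your argument is complete and identical to the paper's; under the literal reading, both your proof and the paper leave the same step unclosed. A minor additional point: ``without loss of generality deterministic'' clashes with the machinery you invoke, since Lemma~\ref{lemma:backward} constructs $Q(s,a)=\alpha\log\pi^-(a|s)+v_s$ and therefore needs $\pi^-(\cdot|s)$ to be a (positive) density; you should instead take a density concentrated on worst actions, or argue by approximation.
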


\begin{remark}
Our theoretical analysis does not need to use properties of function approximators or other components of practical MaxEnt algorithms, because the MaxEnt-optimal policies can be directly characterized and manipulated, as they must align with the soft-Q landscape. In the next section we show how this theoretical analysis explains the practical behaviors of SAC in realistic control environments. 
\end{remark}

\section{Experiments}
\label{section:experiment}

We now show empirical evidence of how the misleading effect of entropy can play a crucial role in the performance of MaxEnt algorithms, both when they fail in complex control tasks and when they outperform non-MaxEnt methods. 

We first analyze the performance of the algorithms on continuous control environments that involve realistic complex dynamics, including quadrotor control (direct actuation on the propellers), wheeled vehicle path tracking (nonlinear dynamic model at high speed), and quadruped robot control (high-fidelity dynamics simulation from commercial project~\cite{opencat_github}). We show how the soft value landscapes mislead policy learning at critical states that led to the failure of the control task, while non-MaxEnt algorithms such as PPO can successfully acquire high-return actions. 

We also revisit some common benchmark environments to show how 
 the superior performance of MaxEnt can be attributed to the same ``misleading" effect that prevents it from getting stuck as non-MaxEnt methods. It reinforces more well-known advantages of MaxEnt with grounded explanations supported by our theoretical understanding. 

To further validate our theory, we add new adaptive entropy tuning in SAC, enabling it to switch from soft-Q to normal Q values when their landscapes have much discrepancy. We then observe that the performance of SAC is improved in the environments where it was failing. In particular, the newly learned policy acquires visibly-better control actions on critical states. This form of adaptive entropy tuning is not intended as a new algorithm -- it relies on global estimation of Q values that is hard to scale. Instead, the goal is to show the importance of understanding the effect of entropy, as directions for future improvement of MaxEnt algorithms. 
\vspace{-.2cm}

\paragraph{Environments.} In the {\bf {Vehicle}} environment, the task is to control a wheeled vehicle under the nonlinear dynamic bicycle model~\cite{kong2015kinematic}
to move at a constant high speed along a path. Effective control is critical for steering the vehicle onto the path. 
   In the {\bf {Quadrotor}} environment, the task is to control a quadrotor to track a simple path under small perturbations. The actions are the independent speeds of its four rotors~\cite{quadrotor}, which makes the learning task harder than simpler models. 
    The {\bf {Opencat}} environment simulates an open-source Arduino-based quadruped robot~\cite{opencat_github}.
    The action space is the 8 joint angles of the robot. 
    \textbf{{Acrobot}} is a two-link planar robot arm with one end fixed at the shoulder and the only actuated joint at the elbow~\cite{acrobot}. The action is changed to continuous torque values instead of simpler discrete values in OpenAI Gym~\cite{gym}.
    In \textbf{{Obstacle2D}}, the goal is to navigate an agent to the goal behind a wall, which creates a clear suboptimal local policy that the agent should learn to avoid. 
    \textbf{{Hopper}} is the standard MuJoCo environment~\cite{mujoco} where SAC typically learns faster and more stably than PPO.

\begin{figure}[t!]
    \centering
\includegraphics[width=1\linewidth]{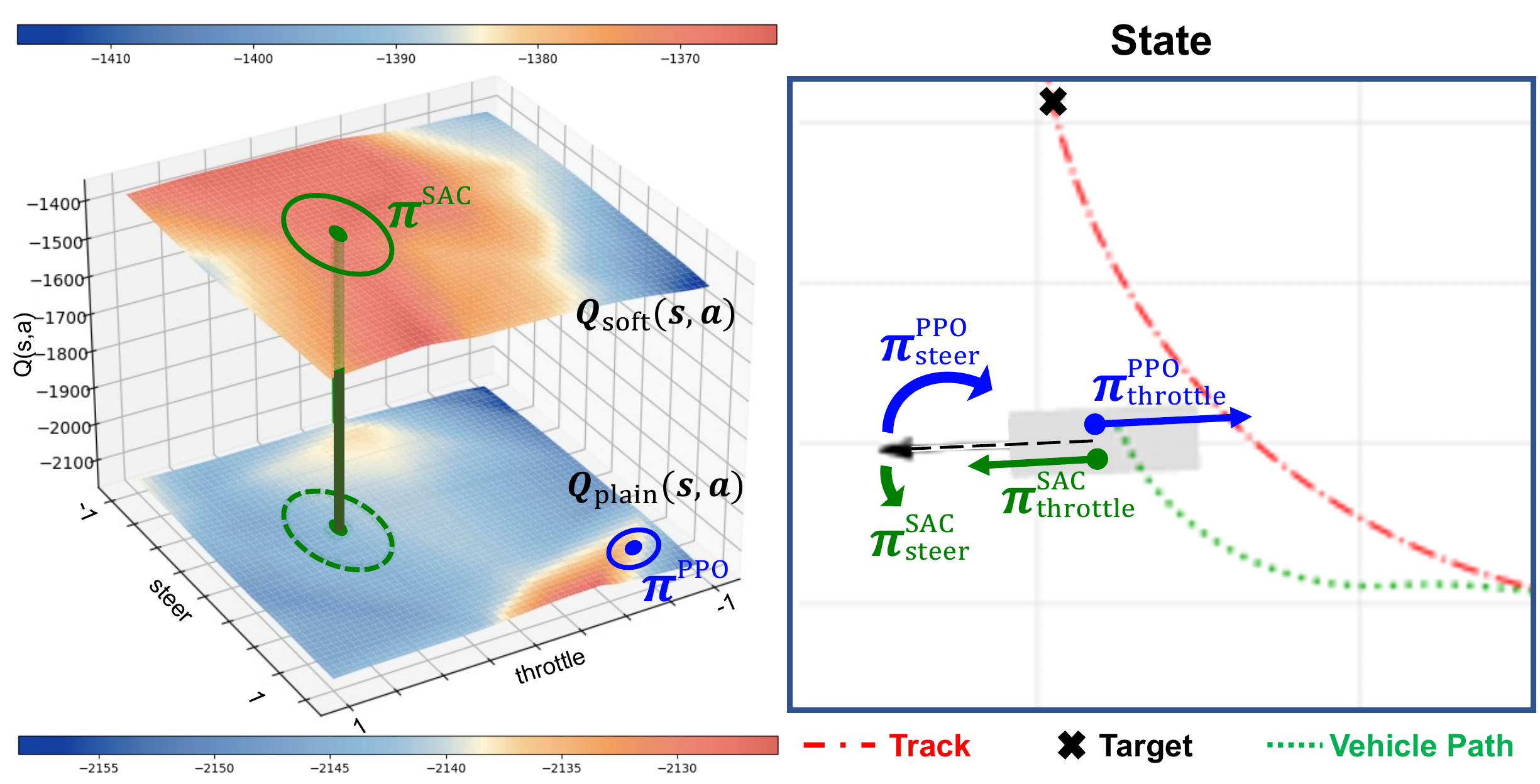}
    \caption{Soft and Plain Q-value landscapes in \textbf{Vehicle}. (\textbf{Left}) Q landscapes with $Q_\textrm{soft}(s,a)$ and without $Q_\textrm{plain}(s,a)$ entropy. Introduced Entropy in SAC elevates the true Q values to encourage exploration, risking missing the only feasible optimal actions. (\textbf{Right}) Rendering of the queried state. The grey rectangle denotes the vehicle with the black arrow as its heading direction. The current SAC policy steers the vehicle left and moves forward, while the PPO policy reasonably steers it back on track with braking, aligning with the optimal region indicated by $Q_\textrm{plain}$.}
    \label{fig:bicycle-q}
\end{figure}

{\bf Overall Performance.} Fig.~\ref{fig:performance-all} shows the overall performance comparison of the learning curves of SAC and PPO across environments. We notice that SAC performs worse than PPO in the first three environments that are harder to control under complex dynamics, while significantly outperforming PPO in Acrobot, Obstacle2D, and Hopper (shown in Appendix~\ref{appendix:all-perf}). Our goal is to understand how these behaviors are affected by entropy in the soft value landscapes. 

\subsection{Misleading Soft-Q Landscapes} 

In Figure~\ref{fig:bicycle-q}, we show the Q values on a critical state in the \textit{\bf Vehicle} control environment, where the vehicle is about to deviate much from the path to be tracked. Because of the high entropy of the policy on the next states where the vehicle further deviates, the soft-Q value landscape at this state is as shown in the top layer on the left in Figure~\ref{fig:bicycle-q}. It fails to understand that critical actions are needed, but instead encourages the agent to stay in the generally high soft-Q value region, where the action at the center is shown as the green arrow on the right-hand side plot in the Figure~\ref{fig:bicycle-q}. It is clear that the action leads the vehicle to aggressively deviate more from the target path. On the other hand, the plain Q value landscape, as shown in the bottom layer on the left, uses exactly the same states that the SAC agent collected in the buffer to train, and it can realize that only a small region in the action space corresponds to greater plain Q values. The blue dot in the plot, mapped to the blue action vector on the vehicle, shows a correct action direction that steers the vehicle back onto the desired path. Notably, this is the action learned by PPO at this state, and it generally explains the success of PPO in this environment. 

\begin{figure}[h!]
    \centering
    \includegraphics[width=1\linewidth]{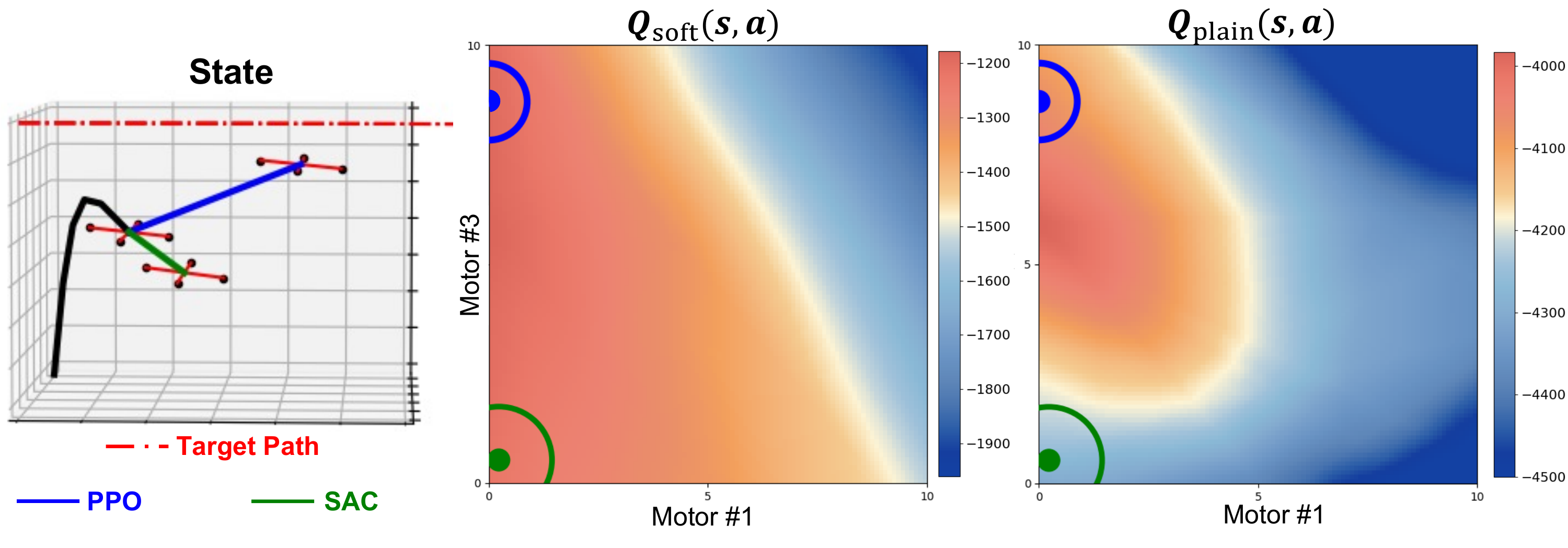}
    \caption{Q landscapes in \textbf{Quadrotor}. (\textbf{Left}) The current state is at the end of the black trajectory. The Red dashed line is the target track. (\textbf{Right}) $Q_\textrm{soft}$ and $Q_\textrm{plain}$ at this state. SAC fails to push upward with minimal action at this state, leading to failure against gravity. PPO successfully applies greater thrust to the back motor (\#3), flying the quadrotor towards the path. }
    \label{fig:quadrotor-q}
\end{figure}

\begin{figure}[h]
    \centering
\includegraphics[width=1\linewidth]{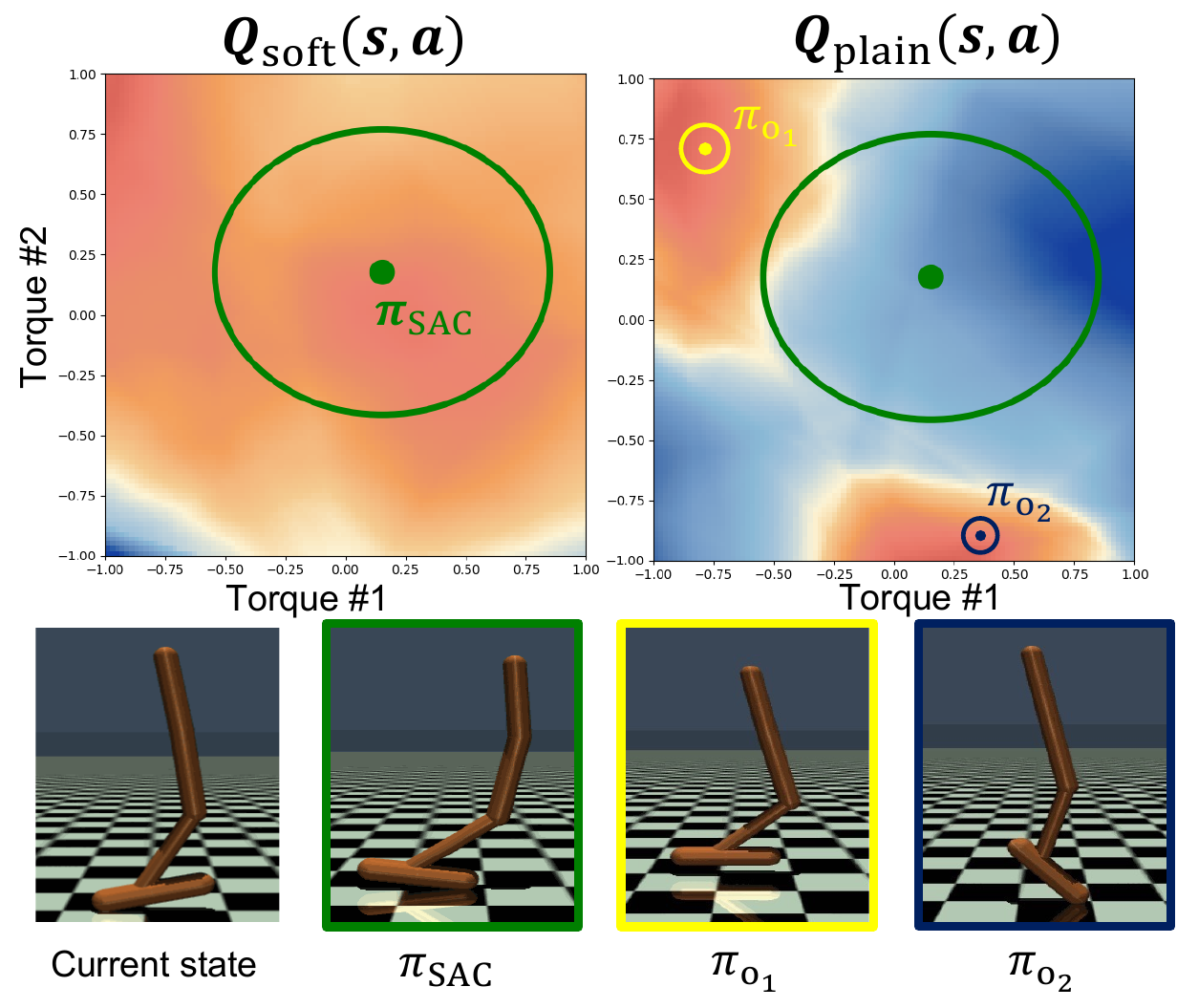}
    \caption{(\textbf{Upper}) Comparison of soft-Q (left) and plain-Q (right) value landscapes at the current state shown below. (\textbf{Lower}) The second to fourth snapshots show Hopper's state after taking actions at the circled position of corresponding colors in the action space shown above. The SAC policy benefits from entropy by 'leaning forward', a risky move despite this action being suboptimal in the current ground truth value. 
    }   
    \label{fig:hopper-q}
    \vspace{-.4cm}

\end{figure}

In Figure~\ref{fig:quadrotor-q}, we observe similar behaviors in the {\bf Quadrotor} control environment. The quadrotor should track a horizontal path. The controller should apply forward-leaning thrusts to the two rotors parallel to the path, while balancing the other two rotors. The middle and right plots in Figure~\ref{fig:quadrotor-q} show the MaxEnt and plain Q values in the action space for the two rotors aligned with the forward direction. Again, because of the high entropy of soft-Q values on mediocre next states that will lead to failure, the value landscape of SAC favors a center at the green dot in the plots, which corresponds to actions of the green arrow in the plot on the left. In contrast, the plain Q value landscape shows that actions of high quality are centered differently. In particular, the blue dot indicates a good action that can be acquired by PPO, controlling the quadrotor towards the right direction. 

\subsection{Benefits of Misleading Landscapes}
\label{section:entropy-benefit}

It is commonly accepted that the main benefit of MaxEnt is the exploratory behavior that it naturally encourages, especially in continuous control problems where exploration is harder to achieve than in discrete action spaces. That is, entropy is designed to ``intentionally mislead" to avoid getting stuck at local solutions. We focused on showing how this design can unintentionally cause failure in learning, but the same perspective allows us to more concretely understand the benefits of MaxEnt in control problems. We briefly discuss here and more details are in the Appendix. 

Figure~\ref{fig:hopper-q} shows the comparison between the soft-Q and plain Q landscapes for the training process in {\bf Hopper}, plotted at a particular state shown in the snapshots in the figure. The action learned by SAC is in fact not a high-return action at this point of the learning process. According to the plain Q values, the agent should take actions that lead to more stable poses. However, after this risky move of SAC, the nature of the environment makes it possible to achieve higher rewards, which led to successful learning. Figure~\ref{fig:acrobot-wall-sac-ppo} shows the similar positive outcome of MaxEnt encouraged by the soft value landscape. Overall, it is important to note that the effectiveness of MaxEnt learning depends crucially on the nature of the underlying environment, and our analysis aims to give a framework for understanding how reward design and entropy-based exploration should be carefully balanced.

\begin{figure}[t!]
    \centering
    \includegraphics[width=1\linewidth]{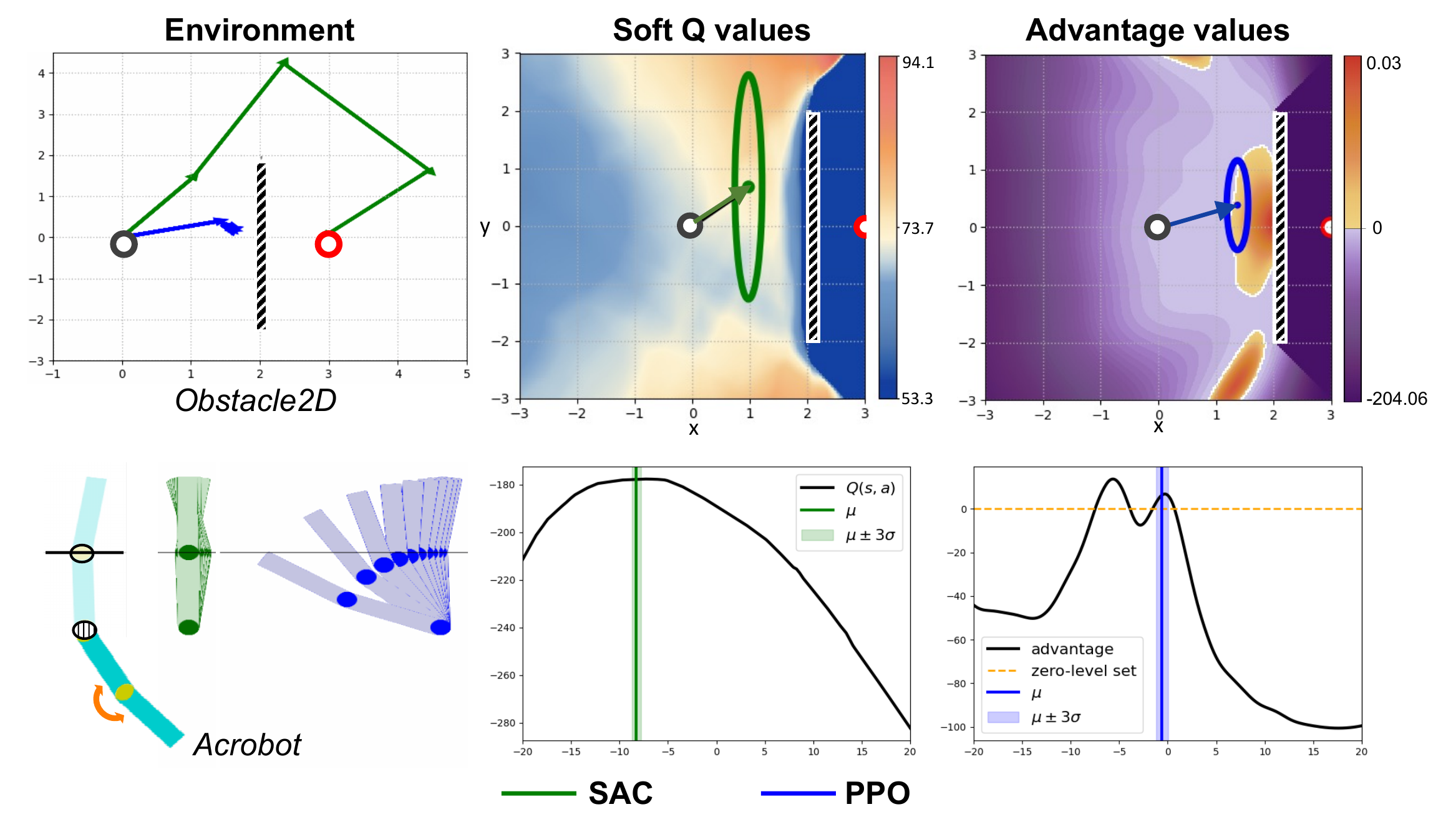}
    \caption{Q/Advantage landscapes of SAC and PPO in {\bf Obstacle2D} and {\bf Acrobot}. (\textbf{Upper}) In the {\bf Obstacle2D} environment, SAC successfully bypasses the wall while PPO fails, as explained by the Soft-Q/Advantage landscapes. (\textbf{Lower}) In {\bf Acrobot}, SAC learns a more stable control policy (applying the right torque to neutralize momentum to prevent failing) while PPO updates are stuck at a local solution that fails to robustly stabilize. 
    } 
        \label{fig:acrobot-wall-sac-ppo}

\end{figure}

\begin{figure}[t!]
    \centering
\includegraphics[width=1\linewidth]{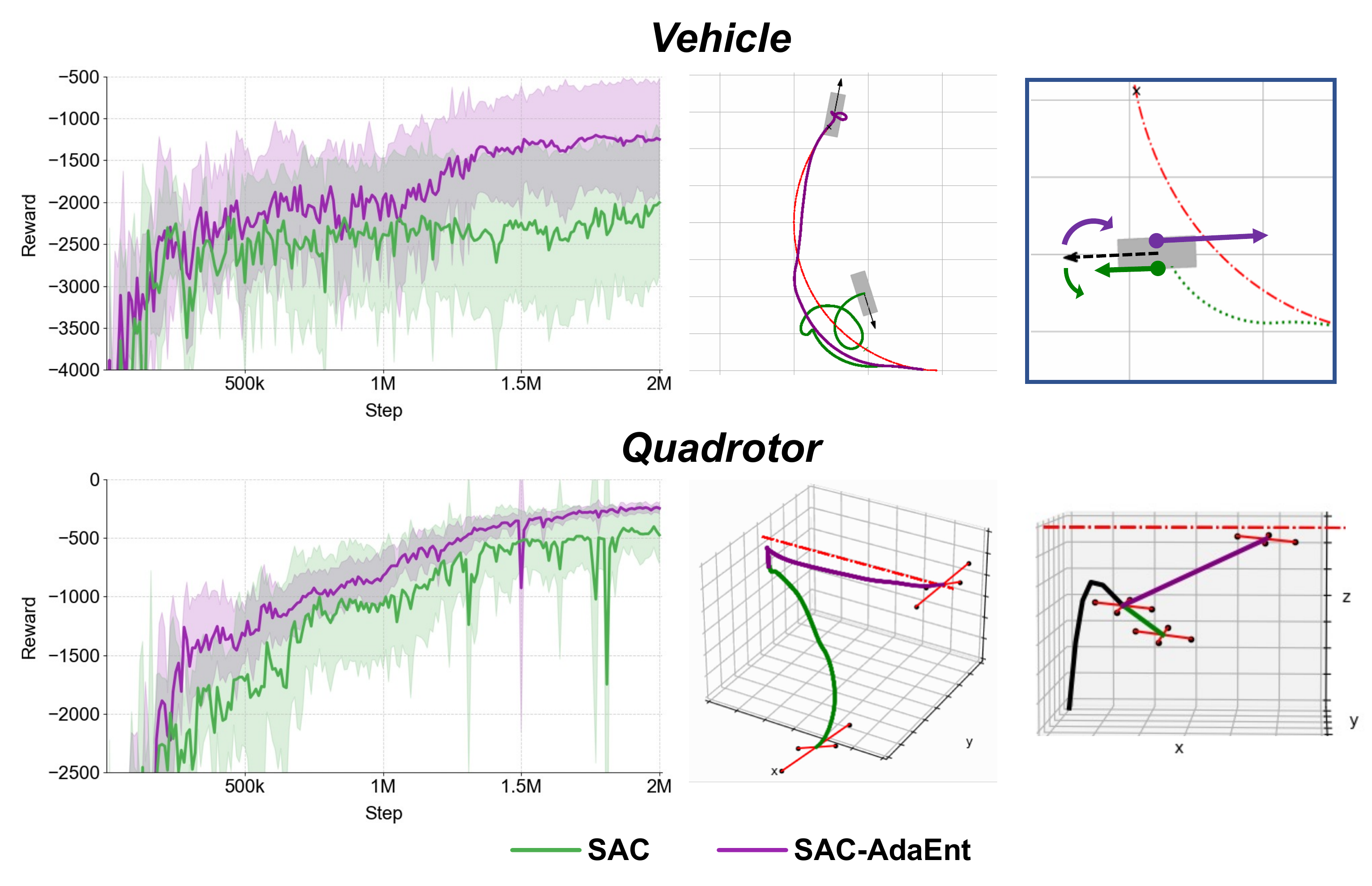}
    \caption{Performance of SAC-AdaEnt v.s. SAC. (\textbf{Left}) Learning curves. (\textbf{Middle}) Full trajectory rendering. (\textbf{Right}) Behavior of policy on critic states. In \textit{Vehicle}, SAC-AdaEnt successfully steers and brakes to bring the vehicle back on track. In \textit{Quadrotor}, it effectively lifts the quadrotor to follow the designated path.}
    \label{fig:performance-adaptive-sac}
    \vspace{-.5cm}
\end{figure}

\subsection{Importance of Adaptive Entropy Scaling}

To further test the theory that the use of soft values in MaxEnt can negatively affect learning, we modify the SAC algorithm by actively monitoring the discrepancy between the soft Q-value landscape and the plain Q-values. 

We simultaneously train two networks for $Q_\textrm{soft}$ and $Q_\textrm{plain}$. During policy updates, we sample from action space at each state under the current policy and evaluate their $Q_\textrm{soft}$ and $Q_\textrm{plain}$ values. If $Q_\textrm{soft}$ deviates significantly from $Q_\textrm{plain}$, it indicates that entropy could mislead the policy, and we rely on $Q_\textrm{plain}$ as the target Q-value for the policy update instead. 
This adaptive approach, named SAC-AdaEnt, ensures a balance between promoting exploration in less critical states and prioritizing exploitation in states where the misleading effects may result in failure. Note that SAC-AdaEnt is different from SAC with an auto-tuned entropy coefficient with uniform entropy adjustment across all states. 

Figure~\ref{fig:performance-adaptive-sac} shows how the adaptive tuning of entropy in SAC-AdaEnt affects learning. In both the {\bf Vehicle} and {\bf Quadrotor} environments, the policy learned by SAC-AdaEnt mostly corrects the behavior of the SAC policy, as illustrated in their overall trajectories and the critical shown in the plots.

Note that the simple change of SAC-MaxEnt is not intended as a new efficient algorithm, because measuring the discrepancy of the Q landscapes requires global understanding at each state, which is unrealistic in high dimensions. It does confirm the misleading effects of entropy in control environments where the MaxEnt approach was originally failing. 
More details of the algorithm are in Appendix~\ref{appendix:sac-adaent}.

\section{Conclusion}

We analyzed a fundamental trade-off of the MaxEnt RL framework for solving challenging control problems. While entropy maximization improves exploration and robustness, it can also mislead policy optimization towards failure. 

We introduced Entropy Bifurcation Extension to show how the ground-truth policy distribution at certain states can become adversarial to the overall learning process in MaxEnt RL. Such effects can naturally occur in real-world control tasks, where states with precise low-entropy policies are essential for success. 

Our experiments validated the theoretical analysis in practical environments such as high-speed vehicle control, quadrotor trajectory tracking, and quadruped control. We also showed that adaptive tuning of entropy can alleviate the misleading effects of entropy, but may offset its benefits too. Overall, our analysis provides concrete guidelines for understanding and tuning the trade-off between reward design and entropy maximization in RL for complex control problems. We believe the results also have implications for potential adversarial attacks in RL from human feedback scenarios. 

\section*{Acknowledgment}

We thank the anonymous reviewers for their helpful comments in revising the paper. This material is
based on work supported by NSF Career CCF 2047034, NSF CCF DASS 2217723, and NSF AI Institute CCF 2112665.

\section*{Impact Statement}
Our paper offers both theoretical and experimental insights without immediate negative impacts. However, it contributes to a deeper understanding of entropy maximization principles and their role in reinforcement learning, laying the groundwork for future advancements in MaxEnt RL that may also involve new models of adversarial attacks and defense on RL-based engineering of critical AI systems. 

\bibliography{ref}
\bibliographystyle{icml2025}

\newpage
\appendix
\onecolumn

\section{Details on the Toy Example}
\subsection{Calculation of soft $Q(s_0,a)$ for SAC}
\label{appendix:toy-calculation}
\begin{figure}[h!]
    \centering
    \includegraphics[width=0.9\linewidth]{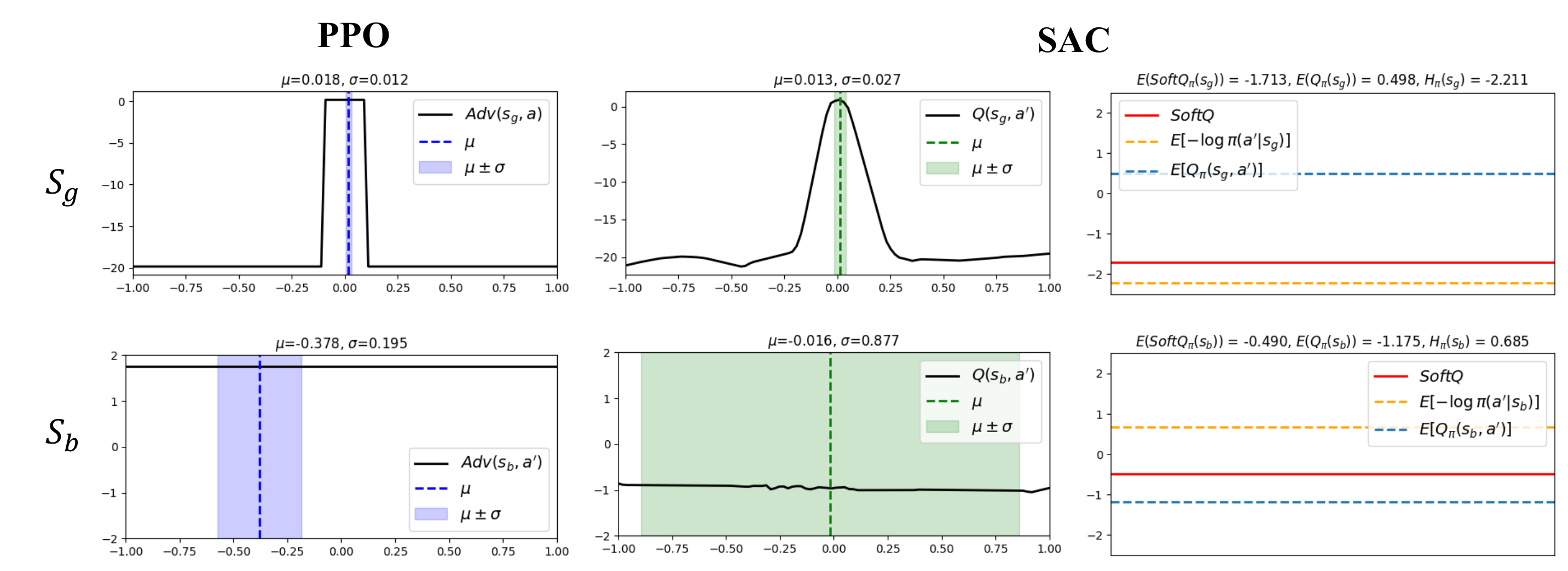}
    \caption{Toy Example Results of SAC and PPO at states $s_g$ and $s_b$}
    \label{fig:toy-appendix}
\end{figure}

In the MaxEnt framework, the policy at $s_0$ is iteratively updated towards the Boltzmann distribution $\pi_Q^*(\cdot|s_0)$. Given the simple transitions in the MDP, we can easily calculate the $Q$ values for any action. We use $\alpha=1$ for the entropy coefficient.

\subsubsection{Direct Calculation without Parametrization}

Since transitions from $s_0$ to $s_g$ and $s_b$ are deterministic and yields zero reward, we have
\[
Q(s_0,a)=\gamma V(s_{g/b})\\
=\gamma\mathbb{E}_{a'\sim\pi}[Q(s_{g/b},a')-\alpha\log \pi(a'|s_{g/b})]\\
=\gamma \int \pi(a'|s_{g/b})\log Z(s_{g/b})\mathrm{d}a'\\
=\gamma \log Z(s_{g/b})
\]
under the optimal policy  $\pi=\pi^*=\exp(\alpha^{-1}Q(s_{g/b} ,a'))/{Z(s_{g/b})}$.

For $a\in [-1,0)$ which transits to $s_g$, 
\[
Q(s_0,a) = \gamma \log Z(s_g) = \gamma \log (\int_{-1}^{1} \exp[Q(s_g, a')] \, \dd a') \\
=\gamma \log( \int_{-0.1}^{0.1} e^1 \, \dd a' + \int_{-1}^{-0.1} e^{-20} \, \dd a' + \int_{0.1}^{1} e^{-20} \, \dd a')\\
=-0.603,
\]
and $a\in [0,1]$ which transits to $s_b$,
\[
Q(s_0,a) = \gamma \log Z(s_b) = \gamma \log (\int_{-1}^{1} \exp[Q(s_b, a')] \, \dd a') \\
=\gamma \log (\int_{-1}^{1} e^{-1} \, \dd a')\\
=-0.304,
\]
As $Q(s_0,[-1,0))<Q(s_0,[0,1])$, SAC is theoretically expected to incorrectly select $a\in [0,1]$ as the optimal policy.

\subsubsection{Calculation with Empirical Parameterization}

Empirically with Gaussian policy, we can also compute $Q(s_0,a)$ given the policies on $s_g, s_b$, as shown in Figure~\ref{fig:toy-appendix}. In practice,  $\pi(s_g)=\mathbf{Squash}[\mathcal{N}(\mu(s_g), \sigma(s_g))], \pi(s_b)=\mathbf{Squash}[\mathcal{N}(\mu(s_b), \sigma(s_b))$], where $\mu(s_g)=0.013, \sigma(s_g)=0.027, \mu(s_b)=0.016, \sigma(s_b)=0.877$ specifically, thus
\[
Q(s_0,a) = \gamma V(s_{g/b}) = \\
\gamma \mathbb{E}_{a'\sim\pi(s_{g/b})}[Q(s_{g/b},a')-\alpha\log_{\mathcal{N}}\pi(a'|s_{g/b})+\alpha\log (1-\tanh^2(a'))]
\]
We can compute this numerically as 
\[
Q(s_0,a) = \gamma V(s_g) \approx -1.696 \ \  for \ a\in [-1,0)
\]
\[
Q(s_0,a) = \gamma V(s_b)\approx -0.485 \ \ for \  a\in [0,1] 
\]
Those values are consistent with the results in Figure~\ref{fig:toy-ppo-sac}. Consequently, as expected, MaxEnt algorithms such as SAC quickly converge to the MaxEnt-optimal policy that leads almost all trajectories to the terminal state $s_{b,T}^-$ with negative rewards. 

\subsection{Results of PPO policies}
The advantage landscapes at $s_0, s_g, s_b$ are shown in Figure~\ref{fig:toy-ppo-sac} and Figure~\ref{fig:toy-appendix}. From those, PPO is observed to converge to the correct optimal policy.

\subsection{Remarks on arbitrary $\alpha$}
Although we set $\alpha=1$ in the toy example for simplicity, it can be an arbitrary non-negative value.  
\begin{remark}
If MaxEnt policy is mislead at $s_0$ i.e. $Q(s_0,a|a\in[-1,0))<Q(s_0,a|a\in[0,1])$ when $\alpha=1$, for arbitrary $\alpha\neq1$, we can keep misleading MaxEnt policy through reward scaling
\[
\hat{r}(s_T^+)=\alpha r(s_T^+),\ \ \hat{r}(s_{g,T}^-)=\alpha r(s_{g,T}^-),\ \ 
\hat{r}(s_{b,T}^-)=\alpha r(s_{b,T}^-)
\]
\end{remark}
\begin{proof}
  For arbitrary $\alpha$,
\[
Q(s_0,a)=\gamma\mathbb{E}_{a'\sim\pi}[Q(s_{g/b},a')-\alpha\log \pi(a'|s_{g/b})]\\
=\gamma\alpha \log Z_{Q/\alpha}(s_{g/b})
\]
where $\pi$ matches the optimal softmax policy $\frac{\exp(Q(s_{g/b} ,a')/\alpha)}{Z_{Q/\alpha}(s_{g/b})}$, and $Z_{Q/\alpha}(s_{g/b})=\int\exp\frac{Q(s_{g/b},a')}{\alpha}\dd a'$.

    For $\hat{r}(s_T^+), \hat{r}(s_{g,T}^-), \hat{r}(s_{b,T}^-)$, we have $\hat{Q}(s_{g/b},a')=\alpha Q(s_{g/b},a')$, and
    $
    Z_{Q/\alpha}(s_{g/b}) = \int\exp{\hat{Q}(s_{g/b},a')}da
    $.
    Therefore the ordering of $Z_{Q/\alpha}(s_{g})$ and $Z_{Q/\alpha}(s_{b})$ is the same as the original $Z_{Q}(s_{g})$ and $Z_{Q}(s_{b})$.
\end{proof}

\begin{remark}
 For arbitrary $\alpha$, we can always find $r_b^-$ so that the optimal policy for standard RL (e.g. PPO) will favor $s_g$ while MaxEnt policy will favor $s_b$, i.e. $Q(s_0,a|a\in[-1,0))<Q(s_0,a|a\in[0,1])$.
\end{remark}

\begin{proof}
 
Let $Q(s_0,a|a\in[-1,0))<Q(s_0,a|a\in[0,1])$, we can get
\begin{align*}
    \log Z_{Q/\alpha}(s_g)&<\log Z_{Q/\alpha}(s_b)\\
    \log\int\exp\frac{Q(s_g,a')}{\alpha}\dd a'&<\log[\exp(\frac{r_b^-}{\alpha})\cdot|A|]\\
    \alpha\log\int\exp\frac{Q(s_g,a')}{\alpha}\dd a'&-\alpha\log|A|<r_b^-<r^+
\end{align*}
Given $\alpha\log\int\exp\frac{Q(s_g,a')}{\alpha}\dd a'\leq\max_{a'} Q(s_g,a')+\alpha \log|A|$, we have left-hand side is upper bounded by $\max_{a'} Q(s_g,a')=r^+$. As long as $\sup_{a'}Q(s_g,a') < r^+$, the open interval 
$
\mathcal{I}(s_g)=\Bigl(\alpha\log\int \exp{\frac{Q(s_g,a')}{\alpha}}\dd a'-\alpha\log|A|,r^+\Bigr)
$ is non-empty. Therefore, we can always pick $
r_b^-\in\mathcal{I}(s_g)
$ so that $Q(s_0,a|a\in[-1,0))<Q(s_0,a|a\in[0,1])$ i.e. MaxEnt favors $s_b$.

\end{proof}

\section{Full Proofs}

\begin{lemma}[Lemma~\ref{lemma:backward}, Backward Compatibility]
Let \( \pi(\cdot|s): A_M \to [0,1] \) be an arbitrary policy distribution over the action space at the targeted state \( s \). Let \( v_s \in \mathbb{R} \) be an arbitrary desired value for state \( s \). There exists a value function \( V:S^{\mu}\rightarrow \mathbb{R} \) on all the newly introduced states \( s^{\mu} \) such that \( v_s \) is the optimal soft value of \( s \) under the MaxEnt-optimal policy at \( s \) (Definition~\ref{eq:softvalue}).  
\end{lemma}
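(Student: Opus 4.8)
The plan is to treat the values $\{V(s^{\mu})\}_{s^{\mu}\in S^{\mu}}$ as the free parameters and read off the soft landscape they induce at $s$. Since the extension keeps the reward at $s$ unchanged and only redirects each original transition $s\to s'$ to $s\to\mu(s')$ with the same probability $P_M(s'|s,a)$, the soft backup at $s$ in $\hat M$ is
\[
Q(s,a)=r_M(s,a)+\gamma\!\!\sum_{s'\in\mathcal N(s)}\!\!P_M(s'|s,a)\,V(\mu(s')).
\]
Over a continuous action space the MaxEnt-optimal policy at $s$ is the Boltzmann distribution $\pi^*_Q(a|s)=\exp(\alpha^{-1}Q(s,a))/Z(s)$ and the optimal soft value equals the log-partition $V(s)=\alpha\log\!\int_{A_M}\exp(\alpha^{-1}Q(s,a))\,\dd a$. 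These two identities are all I will use.

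First I would pin down the landscape that simultaneously forces the policy and the value. Requiring $\pi^*_Q(\cdot|s)=\pi(\cdot|s)$ forces $Q(s,a)=\alpha\log\pi(a|s)+C$ for a single additive constant $C$, and substituting this into the log-partition gives $Z(s)=e^{C/\alpha}\int\pi(a|s)\,\dd a=e^{C/\alpha}$ and hence $V(s)=C$. So the \emph{single} target landscape $Q^{\star}(s,a)=v_s+\alpha\log\pi(a|s)$ makes the MaxEnt-optimal policy exactly $\pi(\cdot|s)$ and the soft value exactly $v_s$, and the whole lemma reduces to realizing $Q^{\star}$ through a choice of $\{V(\mu(s'))\}$.

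To realize it I would exploit the deterministic bifurcation structure: the action space at $s$ is partitioned by which next state each action reaches, so on the region of actions leading to a fixed $s'$ the backup collapses to $Q(s,a)=r_M(s,a)+\gamma\,V(\mu(s'))$, and one solves $r_M(s,a)+\gamma\,V(\mu(s'))=v_s+\alpha\log\pi(a|s)$ for the scalar $V(\mu(s'))$ on that region. For the value-only claim the argument is immediate and robust: setting every $V(\mu(s'))=c$ yields $Q(s,a)=r_M(s,a)+\gamma c$ and $V(s)=\gamma c+\alpha\log\!\int_{A_M}\exp(\alpha^{-1}r_M(s,a))\,\dd a$, a continuous function of $c$ that is (for $\gamma>0$) strictly increasing and surjective onto $\mathbb{R}$, so the intermediate value theorem produces a $c$ with $V(s)=v_s$.

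The step I expect to be the main obstacle is the exact realizability of an \emph{arbitrary} policy shape: $Q(s,\cdot)$ is confined to the finite-dimensional affine family $\{r_M(s,\cdot)+\gamma\sum_{s'}P_M(s'|s,\cdot)\,x_{s'}\}$, so the target $Q^{\star}(s,\cdot)=v_s+\alpha\log\pi(\cdot|s)$ is matched on the nose only when it is constant on each transition region, i.e.\ when $\pi(\cdot|s)$ is piecewise-uniform on the partition induced by the (deterministic) transitions at $s$. I would therefore make explicit in the statement that the shaping is exact at the resolution of this partition (region-wise control of the probability mass through $\{V(\mu(s'))\}$), and handle the degenerate cases separately: where $\pi(a|s)=0$ the term $\log\pi(a|s)=-\infty$ is routed to the terminal bifurcation branch so those actions carry vanishing mass, and integrability of $\exp(\alpha^{-1}Q^{\star})$ must be checked so that $Z(s)$ is finite. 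The remaining computations are routine substitutions into the log-partition identity.
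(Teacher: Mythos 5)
Your first step is exactly the paper's proof of this lemma: the paper likewise forces $Q(s,a)=\alpha\log\pi(a|s)+c$ from $D_{\mathrm{KL}}(\pi(\cdot|s)\,\|\,\pi^*_Q)=0$, computes $\mathbb{E}_{a\sim\pi}[Q(s,a)]=-\alpha H(\pi(\cdot|s))+c$, and concludes $c=v_s$, i.e.\ $Q^\star(s,a)=v_s+\alpha\log\pi(a|s)$; your log-partition derivation of $V(s)=C$ is the same computation in different clothing. Up to that point the two arguments coincide.

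Where you go further---realizing $Q^\star$ through an actual choice of $\{V(\mu(s'))\}$---is precisely where the paper's proof stops, and the obstruction you name is genuine, but it is a gap in the paper's own argument rather than a defect of yours. The lemma promises a value function $V:S^{\mu}\to\mathbb{R}$ on the new states, yet the paper's proof only ever constructs $Q$ on $\{s\}\times A_M$; the inversion back to values appears only in the proof of Theorem~\ref{theoremmain}, where the paper writes $V(s^{\mu})=Q(s,a)/P(s'|s,a)$, and that assigns a well-defined single scalar to $s^{\mu}$ only if $Q^\star(s,\cdot)$ is (up to $r_M(s,\cdot)$) constant on the whole set of actions reaching $s'$, which for an arbitrary $\pi(\cdot|s)$ it is not. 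Your observation that the induced landscape $r_M(s,\cdot)+\gamma\sum_{s'}P_M(s'|s,\cdot)\,V(\mu(s'))$ ranges only over a finite-dimensional affine family, so that $v_s+\alpha\log\pi(\cdot|s)$ can be matched exactly only when $\pi(\cdot|s)$ has the corresponding piecewise form on the transition partition, is therefore a correct diagnosis; your proposed repair---stating the shaping guarantee at the resolution of that partition, with region masses controlled through $\{V(\mu(s'))\}$---is also all that the downstream results actually use, since the toy example and the worst-policy corollary only ever move probability mass between regions such as $A_1^\mu$ and $A_2^\mu$. Note also that under the most literal reading of the lemma (only the value $v_s$ must be attained), your intermediate-value argument alone suffices: setting all $V(\mu(s'))=c$ gives $V(s)=\gamma c+\alpha\log\int_{A_M}\exp\bigl(\alpha^{-1}r_M(s,a)\bigr)\,\dd a$, which is increasing in $c$ and onto $\mathbb{R}$ for $\gamma>0$; this is cleaner than anything in the paper for that part and needs only the integrability caveat you already flag.
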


\begin{proof}
Based on the definition of soft value,
\begin{equation}
\label{eq:softvalue}
    V(s_t) = \mathbb{E}_{a_t \sim \pi}[ Q(s_t, a_t)] + \alpha H(\pi(\cdot|s_t))
\end{equation}
we need to show that there exists a function \( Q:\{s\}\times A_M \to \mathbb{R} \) such that  
\[
v_s = V(s) = \mathbb{E}_{a\sim\pi(\cdot|s)}[Q(s,a)] + \alpha H(\pi(\cdot|s))
\]
which minimizes the KL-divergence between \( \pi \) and the Boltzmann distribution induced by \( Q \), i.e.,  
\[
D_{\text{KL}}(\pi(\cdot \mid s) \parallel \pi_Q^*) = 0.
\]

To ensure \( D_{\text{KL}}(\pi(\cdot|s)\|\pi_Q^*) = 0 \), we can directly construct \( Q(s,a) \) such that it matches the optimal policy 
\[
\pi_Q^*(a|s) = \frac{\exp\left({\alpha^{-1}} Q(s, a)\right)}{Z(Q)},
\]
with normalization term \( Z(Q) = \int_{A_M} \exp\left({\alpha^{-1}} Q(s, a)\right)\mathrm{d}a \).

Taking logarithms on both sides and rearranging for \( Q(s, a) \):
\begin{equation}
\label{eq:Q=log+z}
    Q(s, a) = \alpha \log \pi(a \mid s) + \alpha \log Z(s),
\end{equation}
where the normalization factor is a constant that we can arbitrarily choose without changing the KL divergence. Let \( c = \alpha \log Z(s) \).

Next, taking expectation over \( \pi \):
\[
\mathbb{E}_{a \sim \pi(\cdot \mid s)}[Q(s, a)] = \int_{A_M} \pi(a \mid s) Q(s, a) \mathrm{d}a.
\]
Substituting in \( Q(s, a) \) from Eq.~(\ref{eq:Q=log+z}), we have 
\begin{align*}
    \mathbb{E}_{a\sim\pi(\cdot|s)}[Q(s, a)] 
    &= \int \pi(a|s) \left( \alpha \log \pi(a|s) + c \right)\mathrm{d}a \\
    &= \alpha \int\pi(a|s) \log \pi(a|s)\mathrm{d}a + c\int \pi(a|s)\mathrm{d}a \\
    &= -\alpha H(\pi(\cdot|s)) + c.
\end{align*}

Now, to match the soft value function \( V(s) \), we can set:
\begin{equation}
v_s = \mathbb{E}_{a \sim \pi}[Q(s, a)] + \alpha H(\pi(\cdot|s)) = c - \alpha H(\pi(\cdot|s)) + \alpha H(\pi(\cdot|s)) = c.     
\end{equation}
Thus, solving for \( c \), we obtain \( c = v_s \).

Substituting back into Eq.~(\ref{eq:Q=log+z}), we get
\[
Q(s, a) = \alpha \log \pi(a \mid s) + v_s.
\]
This ensures that both \( v_s = V(s) = \mathbb{E}_{a \sim \pi}[Q(s, a)] + \alpha H(\pi(\cdot|s)) \) and \( D_{\text{KL}}(\pi(\cdot|s)\|\pi_Q^*) = 0 \) are satisfied.
\end{proof}

\begin{lemma}[Lemma~\ref{lemma:forward}, Forward Compatibility]
Let \( s^{\mu} = \mu(s') \) be the newly introduced state for \( s'\in \mathcal{N}(s) \). Let \( V(s') \) be an arbitrarily fixed value for the original next state \( s' \), and \( r(s^{\mu},a) \) an arbitrary reward defined for the newly introduced state \( s^\mu \). Let \( v\in \mathbb{R} \) be an arbitrarily chosen target value. Then, there exist choices of \( A^{\mu}_1 \), \( A^{\mu}_2 \), and \( r(s^{\mu}_T) \) such that \( V(s^{\mu})=v \) is the optimal soft value for the bifurcating state \( s^{\mu} \).  
\end{lemma}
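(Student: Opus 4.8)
The plan is to compute the optimal soft value $V(s^\mu)$ explicitly as a function of the three tunable quantities $|A_1^\mu|$, $|A_2^\mu|$, and $r(s^\mu_T)$, and then show that this function is surjective onto $\mathbb{R}$. By step~4 of Definition~\ref{def:bifurcation}, every action in $A_1^\mu$ deterministically returns to $s'$ and every action in $A_2^\mu$ deterministically leads to the terminal state $s^\mu_T$, so the soft Q-landscape at $s^\mu$ is piecewise constant. Writing $Q_1 = r(s^\mu,a) + \gamma V(s')$ for $a \in A_1^\mu$ (a fixed constant, since $r(s^\mu,\cdot)$ and $V(s')$ are given) and letting $Q_2$ be the constant soft Q-value on $A_2^\mu$, which is determined by $r(s^\mu_T)$ and can be set to any real value, the value under the MaxEnt-optimal (Boltzmann) policy is the log-sum-exp
\begin{equation}
V(s^\mu) = \alpha \log \int_{A_1^\mu \cup A_2^\mu} \exp\!\big(\alpha^{-1} Q(s^\mu,a)\big)\,\dd a = \alpha \log\!\Big( |A_1^\mu|\, e^{Q_1/\alpha} + |A_2^\mu|\, e^{Q_2/\alpha} \Big).
\end{equation}
This identity follows exactly as in the proof of Lemma~\ref{lemma:backward}: substituting the Boltzmann policy into Eq.~(\ref{eq:softvalue}) collapses the expectation and entropy terms to $\alpha \log Z$.

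Given this closed form, the remaining task is to solve $V(s^\mu) = v$, i.e. to realize
\begin{equation}
e^{v/\alpha} = |A_1^\mu|\, e^{Q_1/\alpha} + |A_2^\mu|\, e^{Q_2/\alpha}.
\end{equation}
Since $e^{v/\alpha}$ is strictly positive and $Q_1$ is fixed, I would first choose the length $|A_1^\mu| > 0$ small enough that $|A_1^\mu|\, e^{Q_1/\alpha} < e^{v/\alpha}$, which is always possible because the right-hand side is a fixed positive number. The residual $e^{v/\alpha} - |A_1^\mu|\, e^{Q_1/\alpha}$ is then strictly positive, so I can fix $|A_2^\mu| = 1$ (any positive length works) and set $Q_2 = \alpha \log\!\big(e^{v/\alpha} - |A_1^\mu|\, e^{Q_1/\alpha}\big)$, a well-defined real number realized by an appropriate choice of $r(s^\mu_T)$. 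The intervals $A_1^\mu$ and $A_2^\mu$ are disjoint by construction, so the desired extension exists.

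An equivalent and perhaps cleaner route to surjectivity is a monotonicity/continuity argument: with $|A_1^\mu|$ and $|A_2^\mu|$ fixed, $V(s^\mu)$ is continuous and strictly increasing in $r(s^\mu_T)$, sweeping the interval $\big(Q_1 + \alpha \log |A_1^\mu|,\, +\infty\big)$ as $Q_2$ ranges over $\mathbb{R}$; shrinking $|A_1^\mu| \to 0$ drives the lower endpoint $Q_1 + \alpha \log |A_1^\mu| \to -\infty$, so the attainable set of soft values is all of $\mathbb{R}$ and the intermediate value theorem yields the exact parameters for any target $v$. I do not expect a genuine obstacle here, since the log-sum-exp value is manifestly unbounded above (large $r(s^\mu_T)$) and can be driven arbitrarily low (small $|A_1^\mu|$ with very negative $Q_2$); the only points requiring care are keeping $A_1^\mu, A_2^\mu$ nonempty and disjoint and verifying that the piecewise-constant landscape lets the partition-function integral separate cleanly, both immediate from Definition~\ref{def:bifurcation}.
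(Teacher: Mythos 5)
Your proposal is correct and takes essentially the same route as the paper's proof: it derives the piecewise-constant soft-Q landscape with $Q_1 = r(s^\mu,a) + \gamma V(s')$ fixed and $Q_2$ tunable via $r(s^\mu_T)$, collapses the soft value under the Boltzmann policy to the log-sum-exp form $V(s^\mu) = \alpha \log\bigl( |A_1^\mu| e^{Q_1/\alpha} + |A_2^\mu| e^{Q_2/\alpha} \bigr)$, and inverts this to solve for $r(s^\mu_T)$. If anything, your argument is slightly more complete: you explicitly enforce the positivity constraint $e^{v/\alpha} - |A_1^\mu| e^{Q_1/\alpha} > 0$ by shrinking $|A_1^\mu|$, a condition the paper's solved expression for $r(s^\mu_T)$ requires but leaves implicit in its surjectivity claim.
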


\begin{proof}
Following the definition of the MaxEnt value (Definition~\ref{eq:softvalue}), we need to show:
\begin{equation}
\label{eq:V(s^mu)}
    v = V(s^\mu) = \mathbb{E}_{a\sim \pi(\cdot | s^\mu)}[Q(s^\mu,a)] + \alpha H(\pi(\cdot|s^\mu)),
\end{equation}
where \( \pi(\cdot | s^\mu) \) is the policy distribution at \( s^\mu \) that exactly matches the Boltzmann distribution induced by some Q-function \( Q(s^\mu, a) \), i.e., \( D_{\text{KL}}(\pi \| \pi_Q^*) = 0 \).

With the bifurcating action space \( A^{\mu}=A^{\mu}_1\cup A^{\mu}_2 \) and deterministic transitions, we define:
\begin{align}
Q_1 &= r(s^\mu, a) + \gamma V(s'),\\
Q_2 &= \gamma r(s_T^\mu),
\end{align}
where \( r(s^\mu, a) \) is an arbitrarily fixed reward for any \( a\in A_1^{\mu} \), and for any \( a\in A_2^{\mu} \), we set \( r(s^{\mu},a)=0 \). Since \( r(s^\mu, a) \) and \( V(s') \) are fixed, only \( Q_2 \) is tunable via the choice of \( r(s_T^\mu) \).

A policy that minimizes the KL-divergence with \( \pi_Q^* \) at \( s^{\mu} \) is:
\[
\pi(a | s^\mu) =
\begin{cases}
\frac{e^{Q_1/\alpha}}{|A_1^\mu| e^{Q_1/\alpha} + |A^\mu_2| e^{Q_2/\alpha}}, & a \in A_1^\mu, \\
\frac{e^{Q_2/\alpha}}{|A_1^\mu| e^{Q_1/\alpha} + |A^\mu_2| e^{Q_2/\alpha}}, & a \in A^\mu_2.
\end{cases}
\]
and we define the normalization term as:
\[
Z(Q) = |A_1^\mu| e^{Q_1/\alpha} + |A_2^\mu| e^{Q_2/\alpha}.
\]
The probabilities over action subspaces are:
\[
p_1 = \frac{|A_1^\mu|e^{Q_1/\alpha}}{Z(Q)}, \quad
p_2 = \frac{|A_2^\mu|e^{Q_2/\alpha}}{Z(Q)}.
\]

The expected value is:
\begin{align}
\label{eq:E(Q(s^mu))}
\mathbb{E}_{a \sim \pi}[Q(s^\mu,a)] 
= p_1 Q_1 + p_2 Q_2.
\end{align}
The entropy of \( \pi(\cdot|s^{\mu}) \) is:
\[
H(\pi(\cdot|s^\mu)) = - (p_1 Q_1 + p_2 Q_2)/\alpha + \log Z(Q),
\]
so
\begin{equation}
\label{eq:H(s^mu)}
    \alpha H(\pi(\cdot|s^\mu)) = - (p_1 Q_1 + p_2 Q_2) + \alpha \log Z(Q).
\end{equation}

Substituting Eqs.~\ref{eq:E(Q(s^mu))} and \ref{eq:H(s^mu)} into Eq.~\ref{eq:V(s^mu)}:
\begin{align}
V(s^\mu) &= p_1 Q_1 + p_2 Q_2 + \alpha H(\pi(\cdot|s^\mu)) \nonumber \\
&= p_1 Q_1 + p_2 Q_2 - (p_1 Q_1 + p_2 Q_2) + \alpha \log Z(Q) \nonumber \\
&= \alpha \log Z(Q).
\end{align}

Thus, in this corrected version, we observe that:
\[
V(s^\mu) = \alpha \log \left( |A_1^\mu| e^{Q_1/\alpha} + |A_2^\mu| e^{Q_2/\alpha} \right).
\]
Solving for \( r(s_T^\mu) \), we get:
\[
r(s_T^\mu) = \frac{1}{\gamma} \left[ \alpha \log \left( \frac{e^{v/\alpha} - |A_1^\mu| e^{Q_1/\alpha}}{|A_2^\mu|} \right) \right].
\]
which is valid for all \( \alpha > 0 \), and the function
\[
V(s^\mu) = \alpha \log Z(Q)
\]
is a surjection onto \( \mathbb{R} \) when varying over \( |A_1^\mu| \), \( |A_2^\mu| \), and \( r(s_T^\mu) \), we conclude that for any \( v \in \mathbb{R} \), a valid construction exists such that \( V(s^\mu) = v \).
\end{proof}

\begin{theorem}[Theorem~\ref{theoremmain}, Bifurcation Extension Misleads MaxEnt RL]
Let $M$ be an MDP with optimal MaxEnt policy $\pi^*$, and $s$ an arbitrary state in $S_M$. Let $\pi(\cdot|s)$ be an arbitrary distribution over the action space $A_M$ at state $s$. We can construct an entropy bifurcation extension ${\hat M}$ of $M$ such that ${\hat M}$ is equivalent to $M$ restricted to $S_M\setminus\{s\}$ and does not change its optimal policy on those states, while the MaxEnt-optimal policy at $s$ after entropy bifurcation extension can follow an arbitrary distribution $\pi(\cdot|s)$ over the actions.  
\end{theorem}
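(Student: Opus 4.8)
The plan is to prove the theorem by composing the two compatibility lemmas, with the inserted buffer states $s^\mu$ acting as a decoupling layer that lets me reshape the policy at $s$ without disturbing the rest of the MDP. First I would fix the quantities that must be preserved: let $v_s = V_M(s)$ be the MaxEnt-optimal value of $s$ in $M$, and for each $s'\in\mathcal{N}(s)$ let $V_M(s')$ be its MaxEnt-optimal value. The construction will aim to guarantee three things: the soft value at $s$ remains $v_s$; the value at every $s'\in\mathcal{N}(s)$ remains $V_M(s')$; and the MaxEnt-optimal policy at $s$ equals the prescribed $\pi(\cdot|s)$. The first two conditions are exactly what ensure that no state outside $\{s\}$ is affected, since the values flowing into any predecessor of $s$, or into any alternative predecessor of an $s'$, are left unchanged; combined with the fact that $S^\mu$ and $S^\mu_T$ are fresh and reachable only through $s$, this isolates all modifications to the targeted state.

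Next I would apply Backward Compatibility with target distribution $\pi(\cdot|s)$ and desired value $v_s$. It produces the soft-$Q$ landscape $Q(s,a)=\alpha\log\pi(a|s)+v_s$, under which $\pi(\cdot|s)$ is the zero-KL (hence MaxEnt-optimal) policy at $s$ while $V(s)=v_s$. In $\hat M$ the transitions out of $s$ are routed through the buffer states, so $Q_{\hat M}(s,a)=r_M(s,a)+\gamma\sum_{s'}P_M(s'|s,a)\,V_{\hat M}(\mu(s'))$; matching this against the landscape above tells me precisely which target values $V_{\hat M}(\mu(s'))$ the buffer states must carry.

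I would then apply Forward Compatibility at each buffer state $s^\mu=\mu(s')$ independently. Holding $V(s')=V_M(s')$ fixed, the lemma supplies choices of $|A_1^\mu|$, $|A_2^\mu|$, and $r(s^\mu_T)$ that set $V_{\hat M}(s^\mu)$ to any prescribed real number, and in particular to the value demanded in the previous step. Because each $s^\mu$ branches only back to its own $s'$ (with value frozen) and forward to a fresh terminal $s^\mu_T$, these choices do not interact, so the full profile of buffer values can be realized branch by branch while leaving every $V_M(s')$---and therefore every original state other than $s$---untouched.

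The step I expect to be the main obstacle is the realizability of this composition. Backward Compatibility prescribes a landscape over the whole continuous action space, yet in $\hat M$ only the finitely many scalars $\{V_{\hat M}(\mu(s'))\}$ are adjustable, so the attainable landscapes are confined to $r_M(s,\cdot)$ shifted by combinations of the transition kernels $a\mapsto P_M(s'|s,a)$. The care lies in confirming that the demanded buffer values are simultaneously attainable through Forward Compatibility (whose surjectivity onto $\mathbb{R}$ is what makes each one reachable) and that the induced $Q_{\hat M}(s,\cdot)$ reproduces the intended policy rather than merely sharing its normalizer. Once this is verified, the three preserved or achieved conditions together yield the claim: $\hat M$ coincides with $M$ on $S_M\setminus\{s\}$ and on their optimal policies, while the MaxEnt-optimal policy at $s$ can be made to follow the arbitrary target $\pi(\cdot|s)$.
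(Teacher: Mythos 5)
Your proposal is essentially the paper's own proof: you apply Backward Compatibility (Lemma~\ref{lemma:backward}) at $s$ with target $\pi(\cdot|s)$ and $v_s=V_M(s)$ to produce the landscape $Q(s,a)=\alpha\log\pi(a|s)+v_s$, read off from it the values the buffer states $\mu(s')$ must carry, and then invoke Forward Compatibility (Lemma~\ref{lemma:forward}) branch by branch to realize those values while holding each $V(s')$ fixed --- which is exactly how the paper argues both that $\pi(\cdot|s)$ becomes MaxEnt-optimal at $s$ and that nothing outside $\{s\}$ is disturbed.

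One remark worth adding: the ``main obstacle'' you flag is genuine, and the paper's proof does not dispose of it either. The paper's step $V(s^{\mu})=Q(s,a)/P(s'|s,a)$ (besides dropping $r_M$ and $\gamma$) implicitly assumes that each action's target $Q$-value can be routed to its own buffer state; your correct Bellman relation $Q_{\hat M}(s,a)=r_M(s,a)+\gamma\sum_{s'\in\mathcal{N}(s)}P_M(s'|s,a)\,V_{\hat M}(\mu(s'))$ makes plain that only $|\mathcal{N}(s)|$ scalars are tunable, so whenever several actions share the same next-state distribution (or $\mathcal{N}(s)$ is small relative to the action space), the attainable soft-$Q$ landscapes at $s$ --- hence the attainable MaxEnt-optimal policies --- form a restricted family rather than literally arbitrary distributions. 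Realizing a truly arbitrary $\pi(\cdot|s)$ needs a richness condition such as injectivity of $a\mapsto P_M(\cdot|s,a)$ (satisfied in the deterministic toy example and the control settings the paper targets), and this hypothesis is stated neither in your write-up nor in the paper. So your attempt is faithful to the published argument and, if anything, more candid about where that argument is thin.
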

In other words, without affecting the rest of the MDP, we can introduce bifurcation extension at an arbitrary state such that the MaxEnt optimal policy becomes arbitrarily bad at the affected state. 

\begin{proof}
Following Lemma~\ref{lemma:forward}, we introduce the bifurcation extension as Definition~\ref{def:bifurcation} and obtain Q-values on all the newly introduced states $Q(s,a)$ such that $V(s)$ remains unchanged, while the MaxEnt optimal policy at $s$ becomes $\pi(|s)$. Given such target $Q(s,a)$, which now impose target values on the introduced bifurcation states, i.e., $V(s^{\mu})=Q(s,a)/P(s'|s,a)$, because by construction $P(s^{\mu}|s,a)=P(s'|s,a)>0$. 
We then use the forward compatibility Lemma~\ref{lemma:forward} to set the parameters in the bifurcation extension, such that $V(s^{\mu})$ is attained by the MaxEnt policy at $s^{\mu}$, without changing the existing values on the original next states $V(s')$ for any $s'\in \mathcal{N}(s)$. Since we have not changed the values on $s$ or any $s'\in \mathcal{N}(s)$, the bifurcation extension does not affect the policy on any other state in $S_M\setminus\{s\}$. At the same time, the target arbitrary policy $\pi(\cdot|s)$ is now a MaxEnt optimal policy at $s$. 
\end{proof}

\begin{proposition}[Proposition~\ref{proposition:ppo}, Bifurcation Extension Preserves Optimal Policies]
By setting $r(s^{\mu},a)=(1-\gamma)V(s')$ for every newly introduced bifurcating state $s^{\mu}=\mu(s')$ and $a\in A_1^\mu$, the optimal policy is preserved under bifurcation extension.
\end{proposition}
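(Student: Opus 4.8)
The plan is to work entirely with the standard (non-entropic) Bellman optimality equations and to exhibit a single global candidate value function on $\hat M$ that agrees with $V^*_M$ on the original states; verifying that this candidate satisfies the Bellman optimality operator at every state then identifies it as the unique optimal value function of $\hat M$, and its greedy policy coincides with the original optimal policy on all of $S_M$. The only quantity I need to control is the plain optimal value that each bifurcating state $s^\mu=\mu(s')$ feeds back to the targeted state $s$.

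First I would compute the plain optimal action-values at a bifurcating state $s^\mu$. By the construction, the two branches give $Q^*(s^\mu,a)=(1-\gamma)V(s')+\gamma V_{\hat M}^*(s')$ for $a\in A_1^\mu$ (which returns deterministically to $s'$) and $Q^*(s^\mu,a)=\gamma\, r(s^\mu_T)$ for $a\in A_2^\mu$ (which absorbs into the new terminal state). The key arithmetic is that, with the prescribed reward $r(s^\mu,a)=(1-\gamma)V(s')$ and with $V(s')$ taken to be the preserved optimal value $V^*_M(s')$, the return branch reconstructs $V(s')$ exactly: $(1-\gamma)V(s')+\gamma V(s')=V(s')$. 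In other words, the detour through $s^\mu$ is value-neutral, because the extra reward precisely compensates for the single additional discount factor introduced by delaying the transition.

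Next I would assemble the candidate value function by setting $V(s'')=V^*_M(s'')$ for every $s''\in S_M$, $V(\mu(s'))=V^*_M(s')$ for every bifurcating state, and $V(s^\mu_T)=r(s^\mu_T)$ on the terminals, and then verify the Bellman optimality equation at each state. At states of $S_M\setminus\{s\}$ the transitions and rewards are identical to $M$, so the equation holds automatically (this also cleanly handles cycles in which $s'$ can reach $s$ again, since I never solve the equations sequentially but only check one global candidate). At $s$, since $P_{\hat M}(\mu(s')|s,a)=P_M(s'|s,a)$ and $r_{\hat M}(s,a)=r_M(s,a)$, substituting $V(\mu(s'))=V^*_M(s')$ yields exactly $Q^*_M(s,a)$ for every action, so the optimal action set at $s$ is unchanged. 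At $s^\mu$ the optimality equation reduces to $\max\{V^*_M(s'),\gamma r(s^\mu_T)\}$, which equals the claimed value $V^*_M(s')$ precisely when $\gamma r(s^\mu_T)\le V^*_M(s')$; by uniqueness of the fixed point of the Bellman optimality operator the candidate is then the true optimal value function, and its greedy policy returns through $A_1^\mu$ while matching the original optimal policy on $S_M$.

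The main obstacle I anticipate is the condition $\gamma r(s^\mu_T)\le V^*_M(s')$, which is exactly what guarantees that the return branch, rather than the trap branch $A_2^\mu$, is plain-optimal at $s^\mu$. I would state this inequality explicitly and observe that it is the regime in which the extension is a genuine trap: the forward-compatibility lemma can drive the \emph{soft} value $V(s^\mu)$ to favor $A_2^\mu$ by enlarging $|A_2^\mu|$, whose $\alpha\log|A_2^\mu|$ contribution inflates the log-sum-exp, while the plain terminal return $\gamma r(s^\mu_T)$ stays below $V^*_M(s')$. Thus MaxEnt is misled as in Theorem~\ref{theoremmain}, even though the standard optimum is untouched, so the two constructions are simultaneously realizable.
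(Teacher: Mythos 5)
Your proof is correct and rests on the same key arithmetic as the paper's own proof --- the detour through $s^\mu$ is value-neutral because $(1-\gamma)V^*_M(s') + \gamma V^*_M(s') = V^*_M(s')$ --- but your treatment is considerably more careful than the paper's three-sentence argument, and the extra care pays off in two places. First, the paper simply asserts that ``the optimal policy chooses the actions in $A_1^\mu$'' at the bifurcating state; your verification makes explicit that this holds only under the side condition $\gamma\, r(s_T^\mu) \le V^*_M(s')$, without which the trap branch $A_2^\mu$ would be plain-optimal, $V^*_{\hat M}(s^\mu)$ would exceed $V^*_M(s')$, and the optimal action set at $s$ (and upstream) would change --- so the proposition as stated is implicitly conditional, and you are right both to flag the inequality and to check that it is compatible with Theorem~\ref{theoremmain}: forward compatibility can inflate the soft value $\alpha \log\bigl(|A_1^\mu| e^{Q_1/\alpha} + |A_2^\mu| e^{Q_2/\alpha}\bigr)$ through $|A_2^\mu|$ alone while keeping $\gamma\, r(s_T^\mu)$ below $V^*_M(s')$, so MaxEnt is misled while the plain optimum is untouched. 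Second, your global candidate-plus-uniqueness argument (verify the Bellman optimality operator at every state of $\hat M$, then invoke the contraction fixed point) correctly handles the case where $s'$ can reach $s$ again; a sequential reading of the paper's proof, computing $V(s^\mu)$ from $V(s')$ state by state, would be circular in that case. What the paper's version buys is brevity; what yours buys is an actual proof of the assertion the paper takes for granted, plus the precise regime in which the two halves of the construction --- misleading the MaxEnt-optimal policy while preserving the plain-optimal one --- are simultaneously realizable.
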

\begin{proof}
In the non-MaxEnt setting, the state value of $s^{\mu}$ maximizes the $Q$-value, and the optimal policy chooses the actions in $A_1^\mu$. Since $r(s^{\mu},a)=(1-\gamma)V(s')$, the additional reward on $(s^{\mu},a)$ ensures that $V(s^{\mu})=V(s')$. Note that Lemma~\ref{lemma:forward} holds for arbitrary $r(s^{\mu},a)$. Consequently, there is no change in $Q(s,a)$ and the optimal policy remains the same between $M$ and ${\hat M}$.
\end{proof}

\section{Environments}
\subsection{Vehicle}
The task is
to control a wheeled vehicle
to maintain a constant high speed while following a designated path. 
Practically, the vehicle chases a moving goal, which travels at a constant speed, by giving negative rewards to penalize the distance difference. Also the vehicle receives a penalty for deviating from the track. The overall reward is 
\[
r=r_{\text{goal}}+r_{\text{track}}=-||p_{\text{vehicle}}-p_{\text{goal}}||_2 + \beta_b |R_{\text{vehicle}}^2-R_{\text{track}}^2|
\]
where $p_{\text{vehicle}},p_{\text{goal}}$ are the positions for vehicle and the goal respectively, $\beta_b=0.3$ is the scaling factor, $R_{\text{track}}=10$ is the radius of the quarter-circle track and  \( R_{\text{vehicle}} = \sqrt{x_{\text{vehicle}}^2 + y_{\text{vehicle}}^2} \) is the vehicle’s radial distance from the origin. The initial state is set to make steering critical for aligning the vehicle with the path, given an initial forward speed of $v=3$. The action space is steering and throttle.
The vehicle follows a dynamic bicycle model~\cite{kong2015kinematic}, where throttle and steering affect speed, direction, and lateral dynamics. It introduces slip, acceleration, and braking, requiring the agent to manage stability and traction for precise path tracking.
\subsection{Quadrotor}
The task is to control a quadrotor to track a simple path while handling small initial perturbations. The quadrotor also chases a target moving at a constant speed. The reward is given by the distance between the quadrotor and the target, combined with a penalty on the quadrotor’s three Euler angles to encourage stable orientation and prevent excessive tilting.

The overall reward function is:  
\[
r = r_{\text{goal}} + r_{\text{stability}} = -\beta|| p_{\text{quadrotor}}-p_{\text{target}}||_2 -  |\theta|-|\phi|-|\psi|
\]
where $p_{\text{quadrotor}},p_{\text{target}}$ are the positions for quadrotor and the target respectively,  \(\theta,\phi,\psi\) are the Euler angles, \( \beta=5 \) is a scaling factor.
\textbf{Simplified:} Since the track aligns with one of the rotor axes, we fix the thrust of the orthogonal rotors to zero, providing additional stabilization. The agent controls only the thrust and pitch torque, simplifying the task.
\textbf{Realistic:} The agent must fully control all four rotors, with the action space consisting of four independent rotor speeds, making stabilization and trajectory tracking more challenging.
\subsection{Opencat}
The Opencat environment simulates an open-source Arduino-based quadruped robot, which is based on Petoi’s OpenCat project \cite{opencat_github}. The task focuses on controlling the quadruped's joint torques to achieve stable locomotion while adapting to perturbations. The action space consists of 8 continuous joint torques, corresponding to the two actuated joints for each of the four legs. The agent must learn to coordinate leg movements efficiently to maintain balance and move toward.

\subsection{Acrobot}
Acrobot is a two-link planar robot arm with one end fixed at the shoulder ($\theta_{1}$) and
an actuated joint at the elbow ($\theta_{2}$)~\cite{acrobot}. The control action for this underactuated system involves applying continuous torque at the free joint to swing the arm to the upright position and stabilize it. The task is to minimize the deviation between the joint angle ($\theta_{1}$) and the target upright position ($\theta_{1}=\pi$), while maintaining zero angular velocity when the arm is upright. The reward function is defined as follows: 
\[
r = -\left((\theta_{1}-\pi)^2 + (\theta_{2})^2 + 0.1(\dot{\theta_{1}})^2 + 0.1(\dot{\theta_{2}})^2\right)
\]

\subsection{Obstacle2D}
The goal is to navigate an 2D agent to the goal \((3,0)\) while avoiding a wall spanning \(y=[-2,2]\) starting from \((0,0)\). The action range is \([-3,3]\), which makes it sufficient for the agent to avoid the wall in one step. The reward function is based on progress toward the goal, measured as the difference in distance before and after taking a step. For special cases, it receives +500 for reaching the goal, -200 for hitting the wall.

\subsection{Hopper}
Hopper is from OpenAI Gym based
on mujoco engine, which aims to hop forward by applying
action torques on the joints.

\section{Details on Experiments}

\subsection{Training without entropy in target Q values}
\label{appendix:target-without-entropy}
In Sec.~\ref{section:toy} and Sec.~\ref{section:experiment}, we simultaneously train Q networks with (soft) and without (plain) the entropy term in the target Q values, in order to illustrate the effect of entropy on policy optimization. Specifically,
\begin{align*}
    \mathcal{T}^\pi Q_\textrm{soft}(s_t,a_t) &= r(s_t,a_t)+\gamma \mathbb{E}_{s_{t+1},a_{t+1}}[Q_\textrm{soft}(s_{t+1}, a_{t+1})-\alpha\log \pi(a_{t+1}|s_{t+1})]] \\
    \mathcal{T}^\pi Q_\textrm{plain}(s_t,a_t) &= r(s_t,a_t)+\gamma \mathbb{E}_{s_{t+1},a_{t+1}}[Q_\textrm{plain}(s_{t+1}, a_{t+1})]]
\end{align*}
with the rest of the SAC algorithm unchanged. We still update the policy based on the target Q with entropy, i.e. $Q_\textrm{soft}(s_t,a_t)$ as original SAC and training $Q_\textrm{plain}(s_t,a_t)$ is just for better understanding for entropy's role in the policy updating dynamics.

\subsection{Experiment Hyperparameters}
\label{section:appendix-hyperparameters}
The hyperparameters for training the algorithms are in Table~\ref{tab:hyperparams} and Table~\ref{tab:hyperparameter-lr}.
\begin{table}[h!]
    \centering
    \caption{Hyperparameters for SAC(SAC-auto-alpha), PPO, and DDPG}
    \label{tab:hyperparams}
    \begin{tabular}{l c c c}
        \hline
        \textbf{Hyperparameter} & \textbf{SAC / SAC-auto-$\alpha$} & \textbf{PPO} & \textbf{DDPG} \\ 
        \hline
        Discount factor ($\gamma$) & 0.99 & 0.99 & 0.99 \\ 
        Entropy coefficient ($\alpha$) & 0.2/ N/A & 0 & / \\ 
        Exploration noise & 0 & 0 & 0.1 \\ 
        Target smoothing coefficient ($\tau$) & 0.005 & / &  0.005\\ 
        Batch size & 256 & 64 & 256  \\ 
        Replay buffer size & 1M & 2048 &  1M \\ 
        Hidden layers & 2 & 2 & 2 \\ 
        Hidden units per layer & 256(64 for Toy Example) & 256(64 for Toy Example) & 256 \\ 
        Activation function & ReLU & Tanh & ReLU \\ 
        Optimizer & Adam & Adam & Adam \\ 
        Number of updates per environment step & 1 & 10 & 1 \\ 
        Clipping parameter ($\epsilon$) & / & 0.2 & / \\ 
        GAE parameter ($\lambda$) & / & 0.95 & / \\ 
        \hline
    \end{tabular}

    \centering
    \caption{Learning Rates for SAC and PPO Across Different Environments}
    \label{tab:hyperparameter-lr}
    \begin{tabular}{l l c c c c c c}
        \hline
        \textbf{Algorithm} & \textbf{Learning Rate} & \textbf{Vehicle} & \textbf{Quadrotor} & \textbf{Opencat} & \textbf{Acrobot} & \textbf{Obstacle2D} & \textbf{Hopper} \\ 
        \hline
        \multirow{2}{*}{SAC} & Actor &  1e-3& 3e-4 & 1e-3 & 1e-3 & 1e-3 & 1e-3 \\ 
        & Q-function & 1e-3 & 3e-4 & 1e-3 & 1e-3 & 1e-3 & 1e-3 \\ 
        \hline
        SAC auto-$\alpha$ & $\alpha$ &1e-3 & 3e-4 & 1e-3 & 1e-3 & 1e-3 & 1e-3\\
        \hline
        \multirow{2}{*}{PPO} & Actor &   3e-4 & 3e-4 & 1e-4 & 3e-4 & 3e-4 &3e-4 \\ 
        & Value function & 3e-4 & 3e-4 & 1e-4 & 3e-4 & 3e-4 & 3e-4 \\ 
        \hline
        \multirow{2}{*}{DDPG} & Actor &   1e-3 & 3e-4 & 3e-4 & 1e-3 & 1e-3 &1e-3 \\ 
        & Q-function & 1e-3 & 3e-4 & 3e-4 & 1e-3 & 1e-3 & 1e-3 \\ 
        \hline
    \end{tabular}
\end{table}


\subsection{Performance of DDPG and SAC with Auto-tuned Entropy Coefficient}
\label{appendix:all-perf}
\begin{figure}[h!]
    \centering
    \includegraphics[width=1\linewidth]{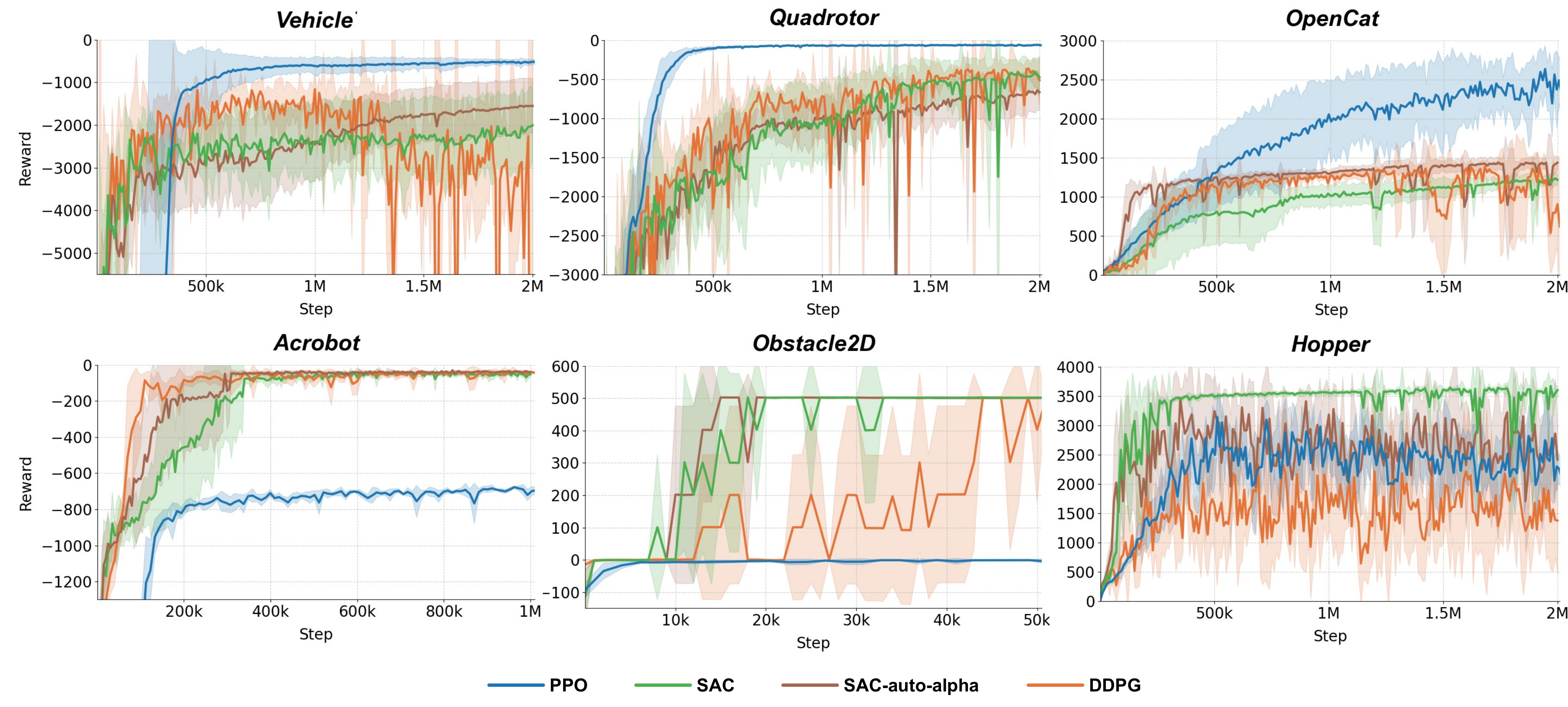}
    \caption{Performance of All Algorithms across six environments}
    \label{fig:ddpg-autoalpha-perf}
\end{figure}

We also run SAC with auto-tuned $\alpha$ and DDPG across all six environments, as shown in Fig.~\ref{fig:ddpg-autoalpha-perf}. The first row includes environments where SAC fails due to critical control requirements, while the second row shows cases where SAC performs better. Notably, auto-tuning the entropy temperature in SAC improves performance in some critical environments but not all, and it still fails to surpass PPO.

\subsection{Benefits of Misleading Landscapes in SAC}
\label{appendix:benefit-landscape}

Nonetheless, entropy in target Q is beneficial as designed because of necessary exploration. In Gym \textit{Hopper}, we investigate the state shown in Fig.~\ref{fig:appendix-hopper-q}. Entropy smooths the Q landscape in regions that may not produce optimal actions at the current training stage, encouraging exploration and enabling the policy to achieve robustness rather than clinging solely to the current optima.

\begin{figure}[h!]
    \centering
\includegraphics[width=0.6\linewidth]{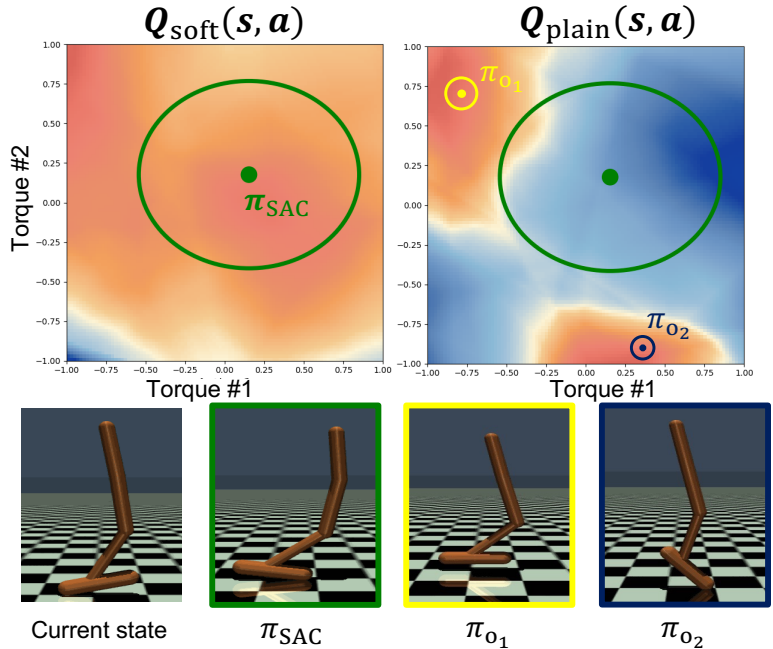}
    \caption{\textbf{Q landscapes in \textit{Hopper}.} \textbf{Upper}: We set torque \#0 (top torso) as the current $\mu_0^{SAC}$ to plot $Q_\textrm{soft}$ and $Q_\textrm{plain}$ for torque \#1 (middle thigh) and \#2 (bottom leg) in the state shown in the bottom-left figure. \textbf{Lower}: Rendered hopper's gestures result from the corresponding policies. SAC's policy benefits from entropy by 'leaning further forward', taking a risky move despite this action being suboptimal in the current true Q. Investigating the peaks $o_1$ and $o_2$ in $Q_\textrm{plain}$ reveals that the hopper tends to 'bend its knee' and 'jump up' when following the corresponding policies, demonstrating less exploration.}
    \label{fig:appendix-hopper-q}
\end{figure}

\subsection{PPO Trapped by Advantage Zero-Level Sets.}
\begin{figure}[h!]
    \centering
    \includegraphics[width=0.9\linewidth]{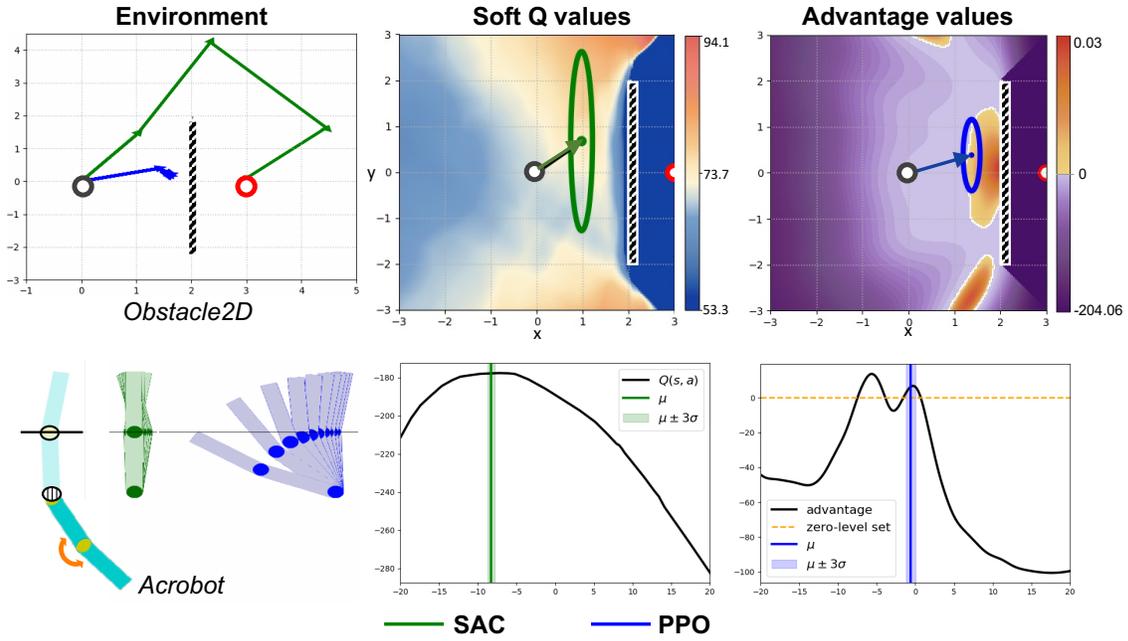}
    \caption{\textbf{Q/Advantage landscapes} of SAC and PPO in \textit{\textbf{Obstacle2D}} and \textit{\textbf{Acrobot}.} \textbf{Upper:} In \textit{Obstacle2D} with start \((0,0)\), goal \((3,0)\), and a wall at \(x=2\) spanning \(y=[-2,2]\), SAC succeeds in bypassing the wall whereas PPO fails. We plot the Q/Advantage landscape of the initial state. For SAC, entropy encourages exploration, guiding updates toward the upper and lower ends of the wall via soft Q. In contrast, PPO remains trapped despite the presence of positive advantage regions near the wall's ends. \textbf{Lower:} In \textit{Acrobot}, both algorithms achieve swing-up, but near the stabilization height, SAC applies the right torque to neutralize momentum, preventing it from falling again. In contrast, PPO remains stuck in a local optimum, leading to repeated failures.}
    \label{fig:appendix-acrobot-wall-sac-ppo}
\end{figure}
\begin{figure}[h!]
    \centering
    \includegraphics[width=0.9\linewidth]{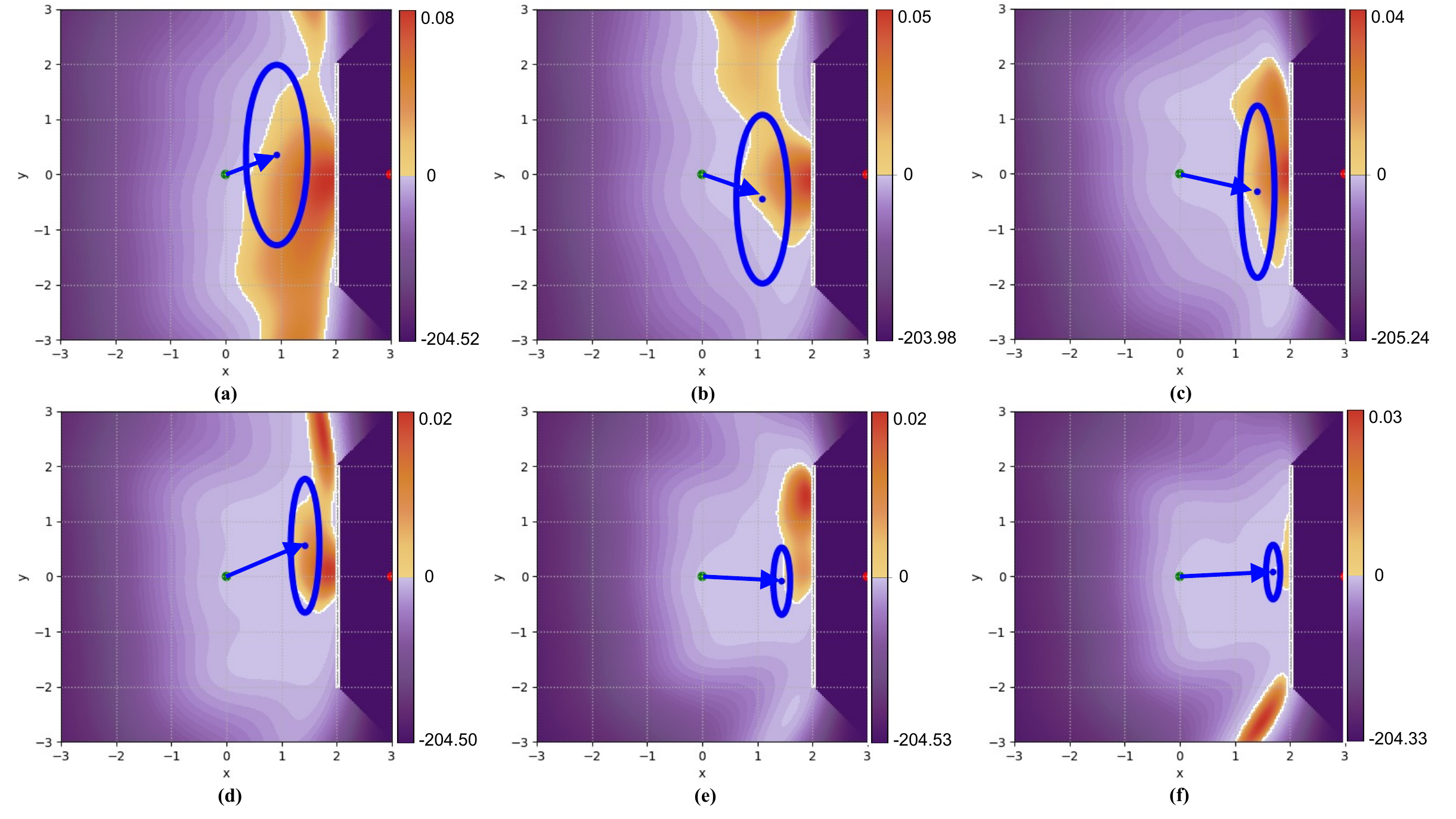}
    \caption{Advantage landscapes in \textit{Obstacle2D} for PPO. (a) to (f) show the advantage landscapes at different training stages for the initial state, where PPO’s policy center remains trapped in front of the wall while \(\sigma\) gradually shrinks.}
    \label{fig:appendix-wall-ppo}
\end{figure}

Without extra entropy to encourage exploration, PPO as an on-policy RL algorithm can be trapped in the zero-level set of advantages. In Obstacle2D (Fig.~\ref{fig:appendix-acrobot-wall-sac-ppo} first row), we plot the policy of the initial state, where the optimal action is to move to either the upper or lower corner of the wall, avoiding it in one step. The reward is designed to encourage the agent to approach the goal while penalizing collisions with a large negative reward. The region in front of the wall is a higher-reward area because of the instant approaching reward. The advantage landscape reveals that PPO’s policy moves to the center of the positive advantage region but remains confined by the zero-level set. Notably, although the optimal regions (upper and lower corners) have positive advantages, PPO remains trapped due to its local behavior. The coupling of exploration and actual policy worsens this issue—if PPO fails to explore actions to bypass the wall, its policy’s $\sigma$ shrinks, further reducing exploration and leading to entrapment. We can also observe this across training stages, as shown in Fig.~\ref{fig:appendix-wall-ppo}.

A similar phenomenon is observed in Acrobot (Fig.~\ref{fig:appendix-acrobot-wall-sac-ppo} second row), where PPO’s policy shrinks prematurely, leading to insufficient exploration.

However, this phenomenon can also be viewed as a strength of PPO, as it builds on the current optimal policy and makes incremental improvements step by step, thus not misled by suboptimal actions introduced by entropy. As a result, PPO performs better in environments where the feasible action regions are small and narrow in the action space, such as in \textit{Vehicle}, \textit{Quadrotor}, and \textit{OpenCat}, which closely resemble real-world control settings.

\section{Details on SAC-AdaEnt}
\label{appendix:sac-adaent}

\subsection{Pseudocode}
We provide the detailed algorithm in Algorithm~\ref{alg:SAC-AdaEnt}.
 
\begin{algorithm}[h]
    \caption{SAC with Adaptive Entropy (SAC-AdaEnt)}
    \label{alg:SAC-AdaEnt}
    \begin{algorithmic}
        \State \textbf{Initialize:} Actor network $\pi_{\theta}$, 
        Q networks and their paired target networks for Q-target with/without entropy $\phi_1, \phi_2, \phi_{targ,1}, \phi_{targ,2}$ (w/ entropy), $\phi'_1, \phi'_2, \phi'_{targ,1}, \phi'_{targ,2}$ (w/o entropy), replay buffer $\mathcal{D}$, similarity threshold $\epsilon$
        \For{each training step}  
            \State Sample action $a_t \sim \pi_{\theta}(a_t | s_t)$ and observe $s_{t+1}, r_t$
            \State Store $(s_t, a_t, r_t, s_{t+1})$ in replay buffer $\mathcal{D}$
            \For{each gradient update step}
                \State Sample minibatch of transitions $(s, a, r, s')$ from $\mathcal{D}$
                \State Compute target value:
                \[
                y = r + \gamma \left( \min_{i=1,2} \hat{Q}_{\phi_i}(s', a') - \alpha \log \pi_{\theta}(a' | s') \right),\ y' = r + \gamma \left( \min_{i=1,2} \hat{Q}_{\phi'_i}(s', a') \right)
                \]
                \State Update Q networks:
                \[
                \phi_i \leftarrow \phi_i - \eta_Q \nabla_{\phi_i} \frac{1}{N} \sum_{n=1}^N (\phi_i(s, a) - y)^2,\ \phi'_i \leftarrow \phi'_i - \eta_Q \nabla_{\phi'_i} \frac{1}{N} \sum_{n=1}^N (\phi'_i(s, a) - y')^2
                \]
                \State For each $s$, sample actions using current policy $A_s=\{a_s|a_s\sim\pi_\theta(\cdot|s)\}$
                \State Compute similarity score:
\[
\text{sim}(\mathbf{Q}, \mathbf{Q'}) = \frac{\mathbf{Q}(s) \cdot \mathbf{Q'}(s)}{\|\mathbf{Q}(s)\| \|\mathbf{Q'}(s)\|}, \text{where} \ 
\mathbf{Q}(s) = \left[ \min_{i=1,2} \hat{Q}_{\phi_i}(s, a_s) \right]_{a_s \sim \pi_{\theta}(a | s)}
,\ 
\mathbf{Q'}(s) = \left[ \min_{i=1,2} \hat{Q}_{\phi'_i}(s, a_s) \right]_{a_s \sim \pi_{\theta}(a | s)}
\]
                
                \State Update actor policy using reparameterization trick:
\[
\theta \leftarrow \theta - \eta_{\pi} \nabla_{\theta} \mathbb{E}_{s \sim \mathcal{D}, a \sim \pi_{\theta}} 
    \begin{cases} 
        \alpha \log \pi_{\theta}(a | s) - Q_{\phi_1}(s, a), & \text{if } \text{sim}(\mathbf{Q}, \mathbf{Q'}) > \epsilon \\
        \alpha \log \pi_{\theta}(a | s) - Q_{\phi'_1}(s, a), & \text{otherwise}
    \end{cases}
\]

                \State Update target networks:
                \[
                \hat{Q}_{\phi_i} \leftarrow \tau Q_{\phi_i} + (1 - \tau) \hat{Q}_{\phi_i},\ \hat{Q}_{\phi'_i} \leftarrow \tau Q_{\phi'_i} + (1 - \tau) \hat{Q}_{\phi'_i}
                \]
            \EndFor
        \EndFor
    \end{algorithmic}
\end{algorithm}

\subsection{SAC-AdaEnt improves performance in environments that SAC fails}
To further validate the misleading entropy claim and enhance SAC's performance in critical environments, we propose SAC with Adaptive Entropy (SAC-AdaEnt) and test it on \textit{Vehicle} and \textit{Quadrotor} environments, showing improvements in Fig.~\ref{fig:appendix-performance-adaptive-sac}. In these environments, SAC relies excessively on entropy as it dominates the soft Q values. To address this, SAC-AdaEnt adaptively combines target Q values with and without entropy. Specifically, we simultaneously train $Q_\textrm{soft}$ and $Q_\textrm{plain}$ as in Appendix~\ref{appendix:target-without-entropy}. During policy updates, we sample multiple actions per state under the current policy and evaluate their $Q_\textrm{soft}$ and $Q_\textrm{plain}$ values. By comparing these values, we compute the similarity of the two landscapes. If $Q_\textrm{soft}$ deviates significantly from $Q_\textrm{plain}$, indicating that entropy could mislead the policy, we rely on $Q_\textrm{plain}$ as the target Q value instead. Otherwise, entropy is retained to encourage exploration. This adaptive approach ensures a balance between safe exploration and exploitation, promoting exploration in less critical states and prioritizing exploitation in states where errors could result in failure. Note that SAC-AdaEnt is fundamentally different from SAC with an auto-tuned entropy coefficient, which applies a uniform entropy adjustment across all states. In contrast, SAC-AdaEnt adaptively adjusts entropy for each state, making it particularly effective in environments requiring precise control and careful exploration.

\begin{figure}[h]
    \centering
\includegraphics[width=1\linewidth]{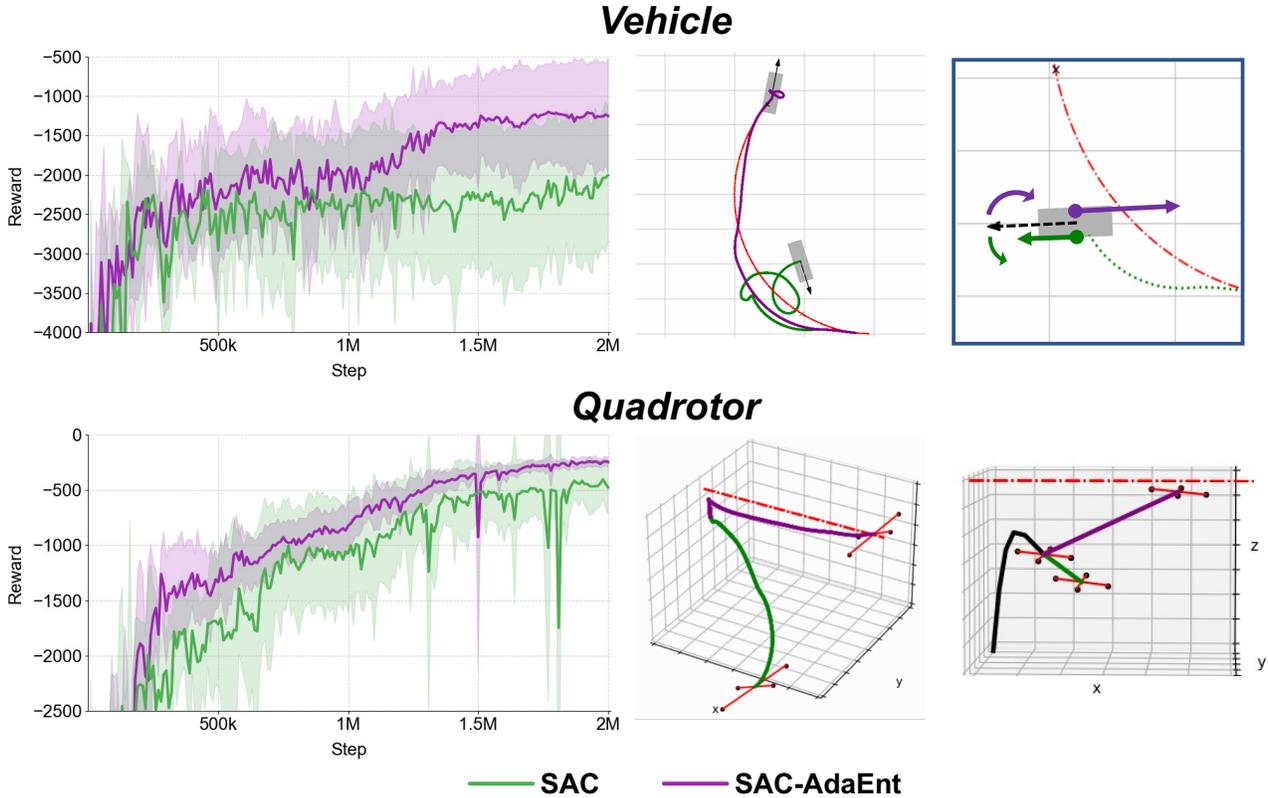}
    \caption{Performance of SAC-AdaEnt v.s. SAC. \textbf{Left:} Reward Improvement. \textbf{Middle:} Full trajectory rendering. \textbf{Right:} Behavior of policy on critic states. In \textit{Vehicle}, SAC-AdaEnt successfully steers and brakes to bring the vehicle back on track, while in \textit{Quadrotor}, it effectively lifts the quadrotor to follow the designated path.}
    \label{fig:appendix-performance-adaptive-sac}
\end{figure}

\subsection{SAC-AdaEnt preserves performance in environments that SAC succeeds}
Not only SAC-AdaEnt improves performance in environments where SAC struggles, but it also retains SAC’s strengths in those where SAC already excels. We report SAC-AdaEnt’s results on \textit{Hopper}, \textit{Obstacle2D}, and \textit{Acrobot} as in Table.~\ref{tab:appendix-sac-adaent-extra-results}:

\begin{table}[h]
\centering
\begin{tabular}{lccc}
\toprule
\textbf{Algorithm}     & \textbf{Hopper}                 & \textbf{Obstacle2D}            & \textbf{Acrobot }                        \\
\midrule
SAC           & $3484.46 \pm 323.87$   & $501.98 \pm 0.62$     & $-45.25 \pm 7.94$     \\
SAC-AdaEnt    & $3285.17 \pm 958.43$   & $501.50 \pm 0.57$     & $-36.31 \pm 16.42$     \\
\bottomrule
\end{tabular}
\caption{Performance (mean $\pm$ std) of SAC and SAC-AdaEnt across tasks.}
\label{tab:appendix-sac-adaent-extra-results}
\end{table}

\section{Additional Experiments on Other MaxEnt algorithm}
\label{appendix:sql}
Although SAC is a powerful MaxEnt algorithm, to ensure our findings generalize beyond SAC’s particular implementation of entropy regularization, we also evaluate Soft Q-Learning (SQL), an alternative MaxEnt method. SQL extends traditional Q-learning by incorporating an entropy bonus into its Bellman backup, resulting in a policy that maximizes both expected return and action entropy—thereby fitting within the maximum‐entropy RL framework. It can extend to continuous actions by parameterizing both the soft $Q$-function and policy with neural networks and using the reparameterization trick for efficient, entropy-regularized updates. We compare the performance of all algorithms on \textit{Vehicle}, \textit{Quadrotor} and \textit{Hopper}. The results in Table.~\ref{tab:sql} show that SQL also suffers from the entropy‐misleading issue, but its AdaEnt variant effectively mitigates this weakness.

\begin{table}[ht]
\centering
\begin{tabular}{lccc}
\toprule
\textbf{Algorithm }                 & \textbf{Vehicle}                  & \textbf{Quadrotor}               & \textbf{Hopper}                   \\
\midrule
SAC ($\alpha=0.2$)         & $-2003.85 \pm 867.82$    & $-475.29 \pm 244.96$    & $3484.46 \pm 323.87$     \\
SAC (auto-$\alpha$)        & $-1551.96 \pm 636.88$    & $-666.62 \pm 233.19$    & $2572.00 \pm 901.35$     \\
SAC-AdaEnt                 & $-1250.45 \pm 725.40$    & $-247.58 \pm 45.15$     & $3285.17 \pm 958.43$     \\
SQL                        & $-2715.48 \pm 453.00$    & $-6082.51 \pm 1632.35$  & $2998.21 \pm 158.19$     \\
SQL-AdaEnt                 & $-2077.59 \pm 266.84$    & $-4499.35 \pm 863.73$   & $3115.94 \pm 25.19$      \\
\bottomrule
\end{tabular}
\caption{Performance (mean $\pm$ std) across Vehicle, Quadrotor, and Hopper tasks for various algorithms.}
\label{tab:sql}
\end{table}

\end{document}